\newcommand{\mathbbop}[1]{\overline{\mathbb{#1}}}
\newtheorem{lemma}{Lemma}
\newtheorem{theorem}{Theorem}
\newtheorem{proposition}{Proposition}
\newcommand{\Gm}{\Gamma}
\newcommand{\nonum}{\nonumber}
\newcommand{\lal}{\lambda_{\alpha}}
\newcommand{\barsig}{\widehat{\sigma}}
\newcommand{\myeq}[1]{\stackrel{\mathclap{\scriptsize{\textnormal{#1}}}}{=}}
\newcommand{\textn}{\textnormal}
\begin{document}
\RRNo{8974}

\makeRR   


\section{Introduction}
\subsection{Problem Motivation}
We consider the problem of hidden community detection in graphs in the presence of side-information. In various disciplines graphs have been used to model, in a parsimonious fashion, relationships between heterogenous data. The presence of a dense hidden community in such graphs is usually indicative of interesting phenomena in the associated real-world network.

An example application of dense subgraph detection in Signal Processing is the problem of Correlation Mining \cite{firouzi2013predictive}. Given a network of correlated signals, a graph is formed with nodes representing signals, and weighted links representing pairwise correlations. The problem of detecting a group of closely correlated signals is then a dense subgraph detection problem on the constructed graph \cite{firouzi2013predictive}. Dense subgraph detection also finds application in real-world computer and social networks; for e.g., in detecting fraudulent activity \cite{chau2006detecting,beutel2013copycatch,smith2014bayesian}. It can, in addition, be viewed as a signal recovery problem on graphs \cite{chen2015signal,wang2015local}.

A majority of subgraph detection algorithms try to find a subset of nodes that maximizes some objective such as the average link density within the subset\cite{lee2010survey}. A good way to benchmark the performance of various community detection algorithms is to validate them on generative graph models with inherent community structure. In this work, we model the hidden community as a small but well-connected Erd\H{o}s-R\'enyi graph embedded within a larger but sparser Erd\H{o}s-Renyi graph. This model was used in \cite{Mifflin2004} to capture terrorist transactions in a computer network.  It is a special case of the Stochastic Block Model (SBM), which has been widely used to assess the performance of different community detection algorithms \cite{rohe2011spectral}.

The study of subgraph detection on generative models is interesting in itself from an algorithmic perspective. Recent works on hidden community detection and related problems demonstrate the presence of sharp phase transitions in the range of parameter values between three regimes: easy (detection achievable with relatively small computational costs), hard (computationally taxing, but detectable), and impossible to detect\cite{hajek2015recovering,montanari2015finding,caltagirone:hal-01391609}. We provide more details on these phenomena while reviewing prior works in the next subsection. The novel aspect of this paper is a theoretical study of the impact of side-information on this computational barrier. The form of side-information we consider is the identity of special nodes called cues that are known to belong to the subgraph, either deterministically or with some level of certainty. One often has access to such prior knowledge in real-world applications\cite{SokolAGM12,zhou2004learning,zhu2003semi}.

By developing and analyzing the asymptotic performance of a local algorithm based on Belief Propagation (BP), we show that even a small amount of side-information can lead to the disappearance of the computational barrier. BP is an efficient way to perform approximate ML detection on certain types of graphs using distributed and local message passing \cite{mezard2009information}. It belongs to the class of guilt-by-association schemes \cite{koutra2011unifying} and has been successfully applied to many practical problems in graphs such as fraud detection \cite{chau2006detecting} and data mining\cite{kang2011mining}.

\subsection{Previous works}


 Consider a graph with $n$ nodes that contains a hidden community of size $K.$ The edge probability between any two nodes within the community is $p$ and it is $q$ otherwise, such that $p>q.$ The parameters $p, q$ and $K$ can in general be functions of $n.$ This model, denoted by $G(K,n,p,q),$ was already considered in \cite{Mifflin2004a,Miller2010,arun2016} in the context of anomaly detection.

A special case of the above model is the hidden clique model with $p = 1$ and $q = 1/2.$ The study of clique detection algorithms demonstrate the presence of phase transitions in the subgraph size $K$ between impossible, hard and easy regimes. If $K\le 2(1-\epsilon) \log_2(n),$ the clique is impossible to detect; however, an exhaustive search detects the clique nodes when $K \ge 2(1+\epsilon)\log_2(n).$ In contrast, the smallest clique size that can be detected in polynomial time is believed to be $c\sqrt{n}$\cite{alon1998finding} for some $c > 0,$ and the minimum clique-size that can be detected in nearly-linear time is believed to be $\sqrt{n/e}$\cite{deshpande2015finding}.

The computational barriers for subgraph detection in a sparse graph without cues were studied in \cite{montanari2015finding,hajek2015recovering,hajek2015information}.
In \cite{montanari2015finding} the author investigated the performance of Maximum Likelihood (ML) detection and BP, and analyzed the phase transition with respect to an effective signal-to-noise ratio (SNR) parameter $\lambda$ defined as
\begin{equation}
  \label{eSNR}
  \lambda = \frac{K^2(p-q)^2}{(n-K)q}.
\end{equation}
The larger the $\lambda,$ the easier it is to detect the subgraph. Subgraph recovery was considered under a parameter setting where $K = \kappa n, p = a/n$ and $q = b/n,$ where $\kappa,a$ and $b$ are constants independent of $n.$ It was shown under this setting that, for any $\lambda > 0,$ an exhaustive search can detect the subgraph with success probability approaching one as $\kappa \to 0.$ However BP, which has quasi-linear time complexity, achieves non-trivial success probability only when $\lambda > 1/e$ in the same regime. Further, for $\lambda<1/e,$ the success probability of the algorithm is bounded away from one. This demonstrates the existence of a computational barrier for local algorithms.

In \cite{hajek2015information} the authors show that when $K = o(n),$ i.e., when $\kappa \to 0,$ and $p, q$ are such that $a = np = n^{o(1)}$ and $p/q = O(1),$ ML detection succeeds when $\lambda = \Omega(\frac{K}{n}\log(\frac{n}{K})),$ i.e., detection is possible even when the SNR parameter goes to zero so long as it does not go to zero too fast. Under the same parameter setting, it was shown that BP succeeds in detecting the subgraph with the fraction of misdetected nodes going to zero, only when $\lambda > 1/e$ \cite{hajek2015recovering}. Therefore, $\lambda = 1/e$ represents a computational barrier for BP in the subgraph detection problem without side-information.

In the present work, we examine the impact of side-information on the above computational barrier. To the best of our knowlege, ours is the first theoretical study of the performance of local algorithms for subgraph detection in the presence of side-information in $G(K,n,p,q).$ In \cite{miller2015residuals}, the authors compared, but only empirically, several guilt-by-association schemes for subgraph detection with cues.

There exist many works on the effect of side-information in the context of identifying multiple communities\cite{allahverdyan2010community,caltagirone:hal-01391609,cai2016inference,Mossel2016}. These works considered a different variant of the SBM where nodes are partitioned into two or more communities, with dense links inside communities and sparse links across communities. The authors of \cite{cai2016inference} and \cite{Mossel2016} consider a BP algorithm to detect two equal-sized communities. In \cite{Mossel2016}, the side-information is such that all nodes indicate their community information after passing it through a binary symmetric channel with error rate $\alpha.$ They show that when $\alpha<1/2,$ i.e., when there is non-trivial side-information, there is no computational barrier and BP works all the way down to the detectability threshold called the Kesten-Stigum threshold\cite{abbe2015detection}. In \cite{cai2016inference}, a vanishing fraction $n^{-o(1)}$ of nodes reveal their true communities. Again, there is no computational barrier and BP works all the way down to the detectability threshold. A fuller picture is available in \cite{caltagirone:hal-01391609}, which considers asymmetric communities and asymmetric connection probabilities within communities. In this setting, the authors of \cite{caltagirone:hal-01391609} demonstrate the presence of all three regimes (easy to detect, hard to detect but possible via exhaustive search, and impossible to detect) as a function of the size of the smallest community. In contrast, \cite{Mossel2016} and \cite{cai2016inference} consider equal-sized communities with the same edge probability within each community. In \cite{caltagirone:hal-01391609,cai2016inference,Mossel2016}, the parameters are chosen such that node degrees alone are not informative. Our work is different from the above settings, in that we deal with a single community, and the degrees can be informative in revealing node identities, i.e., the average degree of a node within the subgraph $Kp+(n-K)q$ is greater than $nq,$ the average degree of a node outside the subgraph. In this setting we show that the computational barrier disappears when side-information is available. We emphasize that our results cannot be obtained as a special case of the results in \cite{allahverdyan2010community,caltagirone:hal-01391609,cai2016inference,Mossel2016}.


\subsection{Summary of Results}
We consider subgraph detection in $G(K,n,p,q)$ with two types of side-information:
\begin{enumerate}
  \item A fraction $\alpha$ of subgraph nodes are revealed to the detector, which we call reliable cues. This represents the case of perfect side-information.
  \item A similar number of nodes are marked as cues, but they are unreliable, i.e., imperfect side-information.
\end{enumerate}
These two types of side-information are typical in semi-supervised clustering applications\cite{SokolAGM12,zhou2004learning,zhu2003semi}.

We use BP for subgraph detection to handle these two kinds of side-information. Our computations are local and distributed and require only neighbourhood information for each node in addition to the graph parameters $p,q$ and $K.$ 

We analyze the detection performance of our algorithm when $p = a/n, q = b/n$ with $a,b$ fixed and $K = \kappa n$ with $\kappa$ fixed, as in the regime of \cite{montanari2015finding}. Under this setting, we derive recursive equations for the distributions of BP messages in the limit as the graph size $n$ tends to infinity. These recursions allow for numerical computation of the error rates for finite values of $a,b$ and $\kappa$.

Based on these recursions, we obtain closed form expressions for the distributions when $a,b \to \infty.$ We then show that when there is non-trivial side-information, the expected fraction of misclassified nodes goes to zero as $\kappa \to 0,$ for any positive value of the respective SNR parameter $\lambda_{\alpha}$ or $\lambda,$ for perfect or imperfect side-information, made explicit later. Thus the computational barrier of $\lambda = 1/e$ for BP without side-information disappears when there is side-information.



We validate our theoretical findings by simulations. To demonstrate the practical usefulness of our algorithm we also apply it to subgraph detection on real-world datasets.

The algorithm for imperfect side-information with its numerical validation on synthetic datasets was submitted for review to ISIT 2017\cite{akadISIT}. The rest of the material, such as the algorithm for perfect side-information, all the proofs and numerical results on real-world datasets, is new in this journal version.

\subsection{Organization}
The rest of the paper is organized as follows. In Subsection \ref{sec:not} we delineate useful notation. In Section \ref{sec:model} we describe the model and define the problem in detail. In Section \ref{sec:bpalgo}, we present our algorithm with perfect cues and explain the steps in its derivation.  In Section \ref{sec:errornan} we derive the asymptotic distribution of BP messages. In particular, in section \ref{sec:detmethod}, we prove our main result on the asymptotic error rate of our algorithm. In Section \ref{impsibp} we present our algorithm with imperfect side-information and provide a result on its asymptotic error rate. In Section \ref{sec:sims} we present results on our experiments on the synthetic graph as well as a few real-world graphs. In Section \ref{sec:con}, we conclude with some suggestions for future work. Some proofs are relegated to supplementary material for lack of space.

\subsection{Notation and Nomenclature} \label{sec:not} A graph node is denoted by a lower case letter such as $i.$ The graph distance between two nodes $i$ and $j$ is the length of the shortest sequence of edges to go from $i$ to $j.$ The neighbourhood of a node $i,$ denoted by $\delta i$ is the set of one-hop neighbours of $i,$ i.e., nodes that are at a graph distance of one. Similarly, we also work with $t$-hop neighbours of $i,$ denoted as $G_i^t,$ the set of nodes within a distance of $t$ from $i.$ Note that $G_i^1 = \delta i.$  We use the following symbols to denote set operations: $C = A\backslash B$ is the set of elements that belong to $A$ and not $B$ and $\Delta$ denotes  the set difference, i.e., $ A\Delta B = (A\cup B) \backslash (A \cap B).$ Also $|C|$ denotes the cardinality of the set $C.$ The indicator function for an event $A$ is denoted by $\mathbf 1(A),$ i.e., $\mathbf 1(A) = 1$ if $A$ is true and 0 otherwise.  The symbol $\sim$ denotes the distribution of a \gls{rv}, for example $X \sim \text{Poi}(\gamma)$ means that $X$ is a Poisson distributed \gls{rv} with mean $\gamma.$ Also, $\mathcal N(\mu,\sigma^2)$ denotes the Gaussian distribution with mean $\mu$ and variance $\sigma^2.$ The symbol $\xrightarrow{D}$ denotes convergence in distribution.

\section{Model and Problem Definition}\label{sec:model}
Let $G(K,n,p,q)$ be a random undirected graph with $n$ nodes and a hidden community $S$ such that $|S| \quad= K.$ Let $\mathcal G = (V,E)$ be a realization of $G(K,n,p,q).$ An edge between two nodes appears independently of other edges such that $\mathbb P ((i,j) \in E |i,j \in S) = p$ and $\mathbb P ((i,j) \in E |i \in S, j \not\in S) = \mathbb P ((i,j) \in E |i,j \not\in S) = q.$ We assume that $S$ is chosen uniformly from $V$ among all sets of size $K.$ Additionally let $p = a/n$ and $q = b/n,$ where $a$ and $b$ are constants independent of $n.$ Such graphs, with average degree $O(1),$ are called diluted graphs. We use a function $\sigma : V \to \{0,1\}^n$ to denote community membership such that $\sigma_i = 1$ if $i \in S$ and $0$ otherwise.
Next we describe the model for selecting $C,$ the set of cues.
To indicate which nodes are cues, we introduce a function $c: V \to \{0,1\}^n$ \gls{st} $c_i = 1$ if $i$ is a cued vertex and $c_i = 0$ otherwise. The model for cues depends on the type of side-information: perfect or imperfect.

The side-information models are as follows:
\begin{enumerate}
  \item \textbf{Perfect side-information}: In this case the cues are reliable, i.e., they all belong to the subgraph. To construct $C$ we sample nodes as follows
  \[
  \mathbb P (c_i = 1|\sigma_i = x) =
  \begin{cases}
    \alpha & \text{ if } x = 1 \\
     0 & \text{ if } x = 0,
  \end{cases}
  \]
  for some $\alpha \in (0,1).$ Under this model we have
  \begin{align}
    n \mathbb P(c_i = 1) &= \sum_{i\in V}\mathbb P(c_i = 1|\sigma_i=1)\mathbb P(\sigma_i = 1)\nonum \\
    &= \alpha K.     \label{eq:alpha}
  \end{align}
  \item \textbf{Imperfect side-information}: Under imperfect side-information, the cues are unreliable.
We generate $C$ by sampling nodes from $V$ as follows using a fixed $\beta \in (0,1].$
For any $i \in V$:
  \begin{equation}\label{eq:samimp}
  \mathbb P\left(c_i = 1|\sigma_i = x\right) =
    \begin{cases}
    \alpha \beta & \text{ if } x = 1,\\
    \frac{\alpha K (1-\beta)}{(n-K)} & \text{ if } x = 0.
    \end{cases}
  \end{equation}
  Under this model we have for any $i \in V,$
  \begin{align*}
\mathbb P (c_i=1)
&= \mathbb P(\sigma_i=1)\mathbb P(c_i=1|\sigma_i = 1) \\
&\quad +  \mathbb P(\sigma_i=0)\mathbb P(c_i=1|\sigma_i = 0) \\
&=  \frac{K}{n} \alpha \beta +  \frac{(n-K)}{n} \frac{\alpha K (1-\beta)}{(n-K)}\\
&= \alpha K/n;
  \end{align*}
  hence it matches with (\ref{eq:alpha}) of the perfect side-information case.
It is easy to verify that under the above sampling
\begin{equation}\label{def:beta}
\mathbb{P}\left(\sigma_i = 1| c_i = 1\right) = \beta,
\end{equation}
 which provides us with the interpretation of $|\log(\beta/(1-\beta))|$ as a reliability parameter for cue information.

\end{enumerate}

Given $G,C$ our objective is to infer the labels $\{\sigma_i, i \in V\backslash C\}.$
The optimal detector that minimizes the expected number of misclassified nodes is the per-node MAP detector given as\cite{hajek2015information}:
\[
\hat{\sigma}_i = \mathbf{1}\left(R_i > \log\frac{\mathbb P (\sigma_i=0)}{\mathbb P (\sigma_i=1)}\right),
\]
where
\[
R_i = \log \left(\frac{\mathbb P (G,C|\sigma_i = 1)}{\mathbb P (G,C|\sigma_i = 0)}\right)
\]
is a log-likelihood ratio of the detection problem.
Observe that this detector requires the observation of the whole graph. Our objective then is to compute $R_i$ for each $i$ using a local Belief Propagation (BP) algorithm and identify some parameter ranges for which it is useful. Specifically, we want to show that a certain barrier that exists for BP when $\alpha = 0$ disappears when $\alpha \beta > 0.$




\section{Belief Propagation Algorithm for Detection with Perfect Side-information}\label{sec:bpalgo}
In this section we present the BP algorithm, Algorithm~\ref{alg:bp_alg}, which performs detection in the presence of perfect side-information.
We provide here a brief overview of the algorithm. At step $t$ of Algorithm \ref{alg:bp_alg}, each node $u \in V \backslash C$ updates its own log-likelihood ratio based on its $t$-hop neighbourhood:
\begin{equation}\label{eq:bpb}
R_u^t := \log \left(\frac{\mathbb P (G_u^t,C_u^t|\sigma_u = 1)}{\mathbb P (G_u^t,C_u^t|\sigma_u = 0)}\right),
\end{equation}
where $G_u^t$ is the set of $t$-hop neighbours of $u$ and $C_u^t$ is the set of cues in $G_u^t,$ i.e., $C_u^t = G_u^t \cap C.$ The beliefs are updated according to (\ref{eq:bpbelief}).
The messages transmitted to $u$  by the nodes $i \in \delta u,$ the immediate neighbourhood of $u,$ are given by
\begin{equation}\label{eq:bpm}
R_{i \to u}^{t} := \log \left(\frac{\mathbb P (G_i^t\backslash u,C_i^t\backslash u|\sigma_i = 1)}{\mathbb P (G_i^t\backslash u,C_i^t\backslash u|\sigma_i = 0)}\right),
\end{equation}
where $G_i^t\backslash u$ and $C_i^t\backslash u$ are defined as above, but excluding the contribution from node $u.$ Node $i$ updates $R_{i \to u}^{t}$ by acquiring messages from its neighbours, except $u,$ and aggregating them according to (\ref{eq:bpupdate}). If node $u$ is isolated, i.e., $\delta u = \emptyset,$ there are no updates for this node. It can be checked that the total computation time for $t_f$ steps of BP is $O(t_f|E|).$

\begin{algorithm}
  \caption{BP with perfect side-information}
  \label{alg:bp_alg}
  \begin{algorithmic}[1]
    \STATE Initialize: Set $R^0_{i \to j}$ to 0, for all $(i,j) \in E$ with $i,j \not\in C.$ Let $t_f < \frac{\log(n)}{\log(np)}+1.$ Set $t=0.$
    \STATE For all directed pairs $(i,u) \in E,$ such that $i,u \notin C$:
    \begin{dmath}
\label{eq:bpupdate}
    R_{i \to u}^{t+1} = -K(p-q) + \sum_{l \in C^1_i, l \neq u} \log\left(\frac{p}{q}\right) +  \sum_{l \in \delta i \backslash C_i^1,l \neq  u} \log \left(\frac{\exp(R^t_{l \to i}-\upsilon) (p/q) +1}{\exp(R_{l \to i}^t - \upsilon) + 1}\right),
\end{dmath}
where $\upsilon = \log(\frac{n-K}{K(1-\alpha)}).$
    \label{op1}
    \STATE Increment $t,$ if $t< t_f-1$ go back to \ref{op1}, else go to \ref{op2}
    \STATE Compute $R_u^{t_f}$ for every $u \in V\backslash C$ as follows: \label{op2}
      \begin{dmath}
\label{eq:bpbelief}
    R_u^{t+1} = -K(p-q) + \sum_{l \in C_u^1} \log\left(\frac{p}{q}\right) + \sum_{l \in \delta u \backslash C_u^1} \log \left(\frac{\exp(R^t_{l \to u}-\upsilon) (p/q) +1}{\exp(R_{l \to u}^t - \upsilon)+ 1}\right)
\end{dmath}
    \STATE The output set is the union of $C$ and the $K-|C|$ set of nodes in $V \backslash C$ with the largest values of $R_u^{t_f}.$

      \end{algorithmic}
\end{algorithm}
The detailed derivation of the algorithm can be found in Appendix \ref{ap:gwtree}. The derivation consists of two steps. First we establish a coupling between $G_u^t,$ the $t$-hop neighbourhood of a node $u$ of the graph and a specially constructed Galton-Watson (G-W) tree\footnote{Detailed in Appendix \ref{ap:gwtree}} $T_u^t$ of depth $t$ rooted on $u.$ This coupling ensures that for a carefully chosen $t = t_f$ the neighbourhood $G_u^{t_f}$ of the node is a tree with probability tending to one as $n \to \infty$ (i.e., with high probability (w.h.p)). The second step of the derivation involves deriving the recursions (\ref{eq:bpupdate}) and (\ref{eq:bpbelief}) to compute (\ref{eq:bpm}) and (\ref{eq:bpb}) respectively, using the tree coupling.

The output of the algorithm is $C$ along with the set of $K - |C|$ nodes with the largest value of log-likelihoods $R_i^{t_f}.$
In the following section we derive the asymptotic distributions of the BP messages as the graph size tends to infinity, so as to quantify the error performance of the algorithm.

\section{Asymptotic Error Analysis}\label{sec:errornan}
In this section we analyze the distributions of BP messages $R_{i\to u}^t$ given $\{\sigma_i = 1\}$ and given $\{\sigma_i = 0\}$ for $i \in V \backslash C.$ First, we derive a pair of recursive equations for the asymptotic distributions of the messages $R_{i\to u}^t$ given $\{\sigma_i = 0,c_i=0\}$ and given $\{\sigma_i = 1,c_i=0\}$ in the limit as $n \to \infty$ in Lemma \ref{l:recdistequation}. In Proposition \ref{prop:largedegreeasyms} we present the asymptotic distributions of the messages in the large degree regime where $a,b \to \infty.$ This result will enable us to derive the error rates for detecting the subgraph in the large degree regime (Theorem \ref{pr:weakrecovery}). Finally, we contrast this result with Proposition \ref{thm:monta} from \cite{montanari2015finding}, which details the limitation of local algorithms.


Instead of studying $R_{i\to u}^t$ directly, we look at the log-likelihood ratios of the posterior probabilities of $\sigma_i$ given as
\[
\widetilde{R}_i^t = \log\left(\frac{\mathbb{P}(\sigma_i =1|G_i^t,C_i^t,c_i=0)}{\mathbb{P}(\sigma_i =0|G_i^t,C_i^t,c_i=0)}\right)
\]
and the associated messages $\widetilde R_{i\to u}^t.$
By Bayes rule, $\widetilde{R}_{i\to u}^t = R_{i\to u}^t - \upsilon,$ where
\[
\upsilon = \log \left(\frac{\mathbb{P}(\sigma_i = 0|c_i = 0)}{\mathbb{P}(\sigma_i = 1|c_i = 0)}\right) =  \log\left(\frac{n-K}{K(1-\alpha)}\right).
\]

Let $\xi_0^{t},\xi_1^{t}$ be \gls{rv}s with the same distribution as the messages $\widetilde R_{i\to u}^{t}$ given $\{\sigma_i = 0, c_i = 0\}$ and given $\{\sigma_i = 1, c_i = 0\},$ respectively in the limit as $n \to \infty.$ Based on the tree coupling in Lemma \ref{l:coupling} of Appendix \ref{ap:gwtree}, it can be shown that these \gls{rv}s satisfy the recursive distributional evolutionary equations given in the following lemma.
\begin{lemma}\label{l:recdistequation}
The random variables $\xi_0^t$ and $\xi_1^t$ satisfy the following recursive distributional equations with initial conditions $\xi_0^0 = \xi_1^0 = \log\left({\kappa(1-\alpha)/(1-\kappa)}\right).$
\begin{align}
\xi_{0}^{(t+1)} &\myeq{D} h + \sum_{i=1}^{L_{0c}}\log(\rho) + \sum_{i=1}^{L_{00}}f(\xi_{0,i}^{(t)}) + \sum_{i=1}^{L_{01}}f(\xi_{1,i}^{(t)}) \label{eq:d1}\\
\xi_{1}^{(t+1)} &\myeq{D} h + \sum_{i=1}^{L_{1c}}\log(\rho)  + \sum_{i=1}^{L_{10}}f(\xi_{0,i}^{(t)})+ \sum_{i=1}^{L_{11}}f(\xi_{1,i}^{(t)}),\label{eq:d2}
\end{align}
where $\myeq{D}$ denotes equality in distribution, $h = -\kappa(a - b) - \upsilon,$ $ \rho := {p}/{q} = a/b, $ and the function $f$ is defined as
\begin{equation}\label{eq:fx}
f(x) := \log\left(\frac{\exp( x) \rho + 1}{\exp(x)+1} \right).
\end{equation}
The rvs $\xi_{0,i}^t, i = 1,2,\ldots$ are independent and identically distributed (iid) with the same distribution as $\xi_0^t.$ Similarly $\xi_{1,i}^t, i = 1,2, \ldots$ are iid with the same distribution as $\xi_1^t.$ Furthermore, $L_{00} \sim \text{Poi}((1-\kappa) b), L_{01} \sim \text{Poi}(\kappa b(1-\alpha)), L_{10} \sim \text{Poi}((1-\kappa)b), L_{11} \sim \text{Poi}(\kappa a(1-\alpha)), L_{0c} \sim \text{Poi}(\kappa b\alpha)$ and $L_{1c} \sim \text{Poi}(\kappa p\alpha).$
\end{lemma}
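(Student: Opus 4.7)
The plan is to combine the tree coupling from Lemma~\ref{l:coupling} with a Poisson thinning argument on the underlying Galton-Watson tree, arguing by induction on $t$. By the coupling, the $t$-hop neighbourhood $G_i^t$ agrees with the G-W tree $T_i^t$ with probability tending to one as $n\to\infty$, so it suffices to derive the recursion for the messages computed on $T_i^t$, where the subtrees rooted at distinct children of any node are mutually independent.

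The first calculational step is to rewrite the BP recursion in shifted form. Subtracting $\upsilon$ from both sides of the update equation~\eqref{eq:bpupdate} and observing that the argument of each logarithm inside the sum there is already $R^t_{l\to i}-\upsilon=\widetilde R^t_{l\to i}$, one obtains
\begin{equation*}
\widetilde R_{i\to u}^{t+1} = -K(p-q) - \upsilon + \sum_{l\in C_i^1\setminus\{u\}} \log(\rho) + \sum_{l\in\delta i\setminus(C_i^1\cup\{u\})} f\bigl(\widetilde R_{l\to i}^t\bigr).
\end{equation*}
Under the scaling $K=\kappa n$, $p=a/n$, $q=b/n$, the deterministic offset $-K(p-q)-\upsilon$ converges to $-\kappa(a-b)-\upsilon = h$, and $f$ is precisely the function in~\eqref{eq:fx}.

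Next I would identify the Poisson parameters. Conditioned on $\sigma_i$ and $c_i=0$, the children of $i$ on $T_i^t$ partition into (up to) four independent groups indexed by (child label, cue status). Independence of edges in the ER model, combined with the perfect-side-information cue rule ($\mathbb{P}(c_l=1\mid\sigma_l=1)=\alpha$, $\mathbb{P}(c_l=1\mid\sigma_l=0)=0$), makes each count a binomial that converges to an independent Poisson as $n\to\infty$; thinning by $\alpha$ versus $1-\alpha$ gives the rates $\kappa a\alpha,\ \kappa a(1-\alpha),\ (1-\kappa)b$ in the three non-vanishing groups when $\sigma_i=1$, and $\kappa b\alpha,\ \kappa b(1-\alpha),\ (1-\kappa)b$ when $\sigma_i=0$; no cues arise from the subgraph complement. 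These rates match $L_{1c},L_{11},L_{10}$ and $L_{0c},L_{01},L_{00}$ in the statement. By the induction hypothesis and the independence of the subtrees, each incoming non-cue message $\widetilde R_{l\to i}^t$ is an iid copy of $\xi_0^t$ or $\xi_1^t$ according to $\sigma_l\in\{0,1\}$, while every cue child contributes the deterministic term $\log(\rho)$. Substituting into the shifted recursion yields exactly~\eqref{eq:d1} and~\eqref{eq:d2}. The base case is immediate: shifting the initialization $R^0\equiv 0$ by $-\upsilon$ produces $\log(\kappa(1-\alpha)/(1-\kappa))$.

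The main obstacle I anticipate is the joint control of the two limit passages: the tree-coupling error and the binomial-to-Poisson approximation, both taken uniformly over the at most $t_f=O(\log n/\log(np))$ generations of $T_i^t$. The coupling error is absorbed by Lemma~\ref{l:coupling}, while the binomial-to-Poisson convergence in the sparse regime $p,q=\Theta(1/n)$ is standard via total-variation bounds; the delicate point is to compose these errors generation by generation and show they vanish for each fixed $t$ as $n\to\infty$. Once this limiting argument is in place, the algebraic reduction of the BP update to the function $f$ and the identification of the Poisson thinning parameters are mechanical.
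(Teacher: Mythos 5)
Your proposal is essentially the paper's argument, which is stated tersely as ``this follows from~(\ref{eq:bpupdate}) and the tree coupling in Lemma~\ref{l:coupling}'': replace the graph neighbourhood by the G-W tree, rewrite the BP update in shifted form, and read off the Poisson parameters by splitting children according to (label, cue) status. Two small remarks: first, the ``binomial-to-Poisson'' step you anticipate as a delicate point is not actually an extra source of error here, since the G-W tree $T_u^t$ is constructed with Poisson offspring and independent cue-thinning by design (Appendix~\ref{ap:gwtree}), so Lemma~\ref{l:coupling} already delivers the tree together with its Poisson degrees in one shot and no generation-by-generation error composition is needed; second, your derived rate $\kappa a\alpha$ for the cue children of a $\sigma_i=1$ node is correct, and the lemma's ``$L_{1c}\sim\mathrm{Poi}(\kappa p\alpha)$'' is evidently a typo for $\mathrm{Poi}(\kappa a\alpha)$.
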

\begin{proof}This follows from (\ref{eq:bpupdate}) and the tree coupling in Lemma \ref{l:coupling} of Appendix \ref{ap:gwtree}.\end{proof}

We define the effective SNR for the detection problem in the presence of perfect side-information as:
\begin{equation}\label{eq:effsnr}
\lambda_{\alpha} = \frac{K^2(p-q)^2(1-\alpha)^2}{(n-K)q} = \frac{\kappa^2(a-b)^2(1-\alpha)^2}{(1-\kappa)b},
\end{equation}
where the factor $(1-\alpha)^2$ arises from the fact that we are now trying to detect a smaller subgraph of size $K(1-\alpha).$


We now present one of our main results, on the distribution of BP messages in the limit of large degrees as $a,b \to \infty$ such that $\lambda_{\alpha}$ is kept fixed.
\begin{proposition}
\label{prop:largedegreeasyms}
In the regime where  $\lambda_{\alpha}$ and $\kappa$ are held fixed and $a,b \to \infty,$ we have
\[
\xi_0^{t+1} \xrightarrow{D} \mathcal{N}\left(-\log\frac{1-\kappa}{\kappa(1-\alpha)} - \frac{1}{2}\mu^{(t+1)},\mu^{(t+1)}\right)
\]
\[
\xi_1^{t+1} \xrightarrow{D} \mathcal{N}\left(-\log\frac{1-\kappa}{\kappa(1-\alpha)} + \frac{1}{2}\mu^{(t+1)},\mu^{(t+1)}\right).
\]
The variance $\mu^{(t)}$ satisfies the following recursion with initial condition $\mu^{(0)} = 0:$
\begin{dmath}\label{eqrecu}
  \mu^{(t+1)} = \lambda_{\alpha}\alpha\frac{1-\kappa}{(1-\alpha)^2\kappa} + \lambda_{\alpha} \mathbb{E} \left ( \frac{(1-\kappa)}{\kappa (1-\alpha) + (1-\kappa) \exp(-\mu^{(t)}/2  - \sqrt{\mu^{(t)}} Z)}\right),
\end{dmath}
where the expectation is taken w.r.t. $Z \sim \mathcal{N}(0,1).$
\end{proposition}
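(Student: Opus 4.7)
The plan is to induct on $t$. The base case is immediate since $\xi_0^0 = \xi_1^0 = -\upsilon$ (with $\upsilon \to \log\frac{1-\kappa}{\kappa(1-\alpha)}$ in the limit) is a point mass, matching the claimed degenerate Gaussian with mean $-\upsilon$ and variance $\mu^{(0)} = 0$. For the inductive step, I would work with characteristic functions via the compound-Poisson formula
\[
\log \mathbb{E}\bigl[\exp\bigl(i\theta {\textstyle\sum}_{i=1}^{N} g(X_i)\bigr)\bigr] \;=\; \gamma \bigl(\mathbb{E}[e^{i\theta g(X)}] - 1\bigr), \qquad N\sim\textn{Poi}(\gamma),
\]
with the $X_i$ i.i.d.\ and independent of $N$, applied to each of the three compound-Poisson sums in \eqref{eq:d1}. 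The relevant regime is $\epsilon := \rho - 1 = (a-b)/b \to 0$ together with $b\epsilon^2 = (a-b)^2/b = \lambda_\alpha(1-\kappa)/(\kappa^2(1-\alpha)^2) = O(1)$: this is the natural CLT-scaling window, in which the Taylor expansions $f(x) = \epsilon s(x) - \tfrac{\epsilon^2}{2}s(x)^2 + O(\epsilon^3)$ (with $s(x) := e^x/(1+e^x)$ the sigmoid) and $\log\rho = \epsilon - \epsilon^2/2 + O(\epsilon^3)$ are effective.

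Substituting these expansions, along with $e^{i\theta y} - 1 = i\theta y - \tfrac{\theta^2}{2} y^2 + O(\theta^3 y^3)$, reorganises $\log \phi_{\xi_0^{t+1}}(\theta)$ by powers of $\epsilon$. After absorbing $h = -\kappa(a-b) - \upsilon$, the $i\theta(a-b)$ coefficient becomes $(a-b)\bigl[-\kappa(1-\alpha) + (1-\kappa)\mathbb{E}[s(\xi_0^t)] + \kappa(1-\alpha)\mathbb{E}[s(\xi_1^t)]\bigr]$. This vanishes by the Bayesian identity $\pi_0\mathbb{E}[s(\xi_0^t)] + \pi_1 \mathbb{E}[s(\xi_1^t)] = \pi_1$, with $\pi_0 = (1-\kappa)/(1-\alpha\kappa)$ and $\pi_1 = \kappa(1-\alpha)/(1-\alpha\kappa)$, itself an immediate consequence of the posterior interpretation $s(\xi_i^t) = \mathbb{P}(\sigma_i = 1 \mid G_i^t, C_i^t, c_i = 0)$ furnished by the G-W tree coupling of Lemma~\ref{l:coupling}, combined with the tower property $\mathbb{E}[\mathbb{P}(\sigma_i = 1 \mid X)] = \pi_1$. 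The surviving $O(1)$ part equals $-i\theta(\upsilon + V/2) - \tfrac{\theta^2}{2} V$ with $V := b\epsilon^2\bigl[\kappa\alpha + (1-\kappa)\mathbb{E}[s(\xi_0^t)^2] + \kappa(1-\alpha)\mathbb{E}[s(\xi_1^t)^2]\bigr]$, so that L\'evy's continuity theorem yields $\xi_0^{t+1} \xrightarrow{D} \mathcal{N}(-\upsilon - V/2, V)$; the remainder control $O(b\epsilon^3) \to 0$ is immediate from $|s| \le 1 \Rightarrow |f(x)| \le \epsilon$ uniformly in $x$ and $b\epsilon^3 = (b\epsilon^2)\epsilon \to 0$.

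To match $V$ with \eqref{eqrecu}, I would invoke the second-moment identity $\pi_0 \mathbb{E}[s(\xi_0^t)^2] + \pi_1 \mathbb{E}[s(\xi_1^t)^2] = \pi_1 \mathbb{E}[s(\xi_1^t)]$, which follows algebraically from $s(x)^2(1+e^x) = e^x s(x)$ together with the Gaussian density ratio $\pi_1 f_1(x) = \pi_0 f_0(x) e^x$ (encoded by $m_1 - m_0 = \mu^{(t)}$ and $(m_0 + m_1)/2 = -\upsilon$ in the inductive hypothesis). Substituting the resulting closed form $\mathbb{E}[s(\xi_1^t)] = \mathbb{E}\bigl[\kappa(1-\alpha)/\{\kappa(1-\alpha) + (1-\kappa)e^{-\mu^{(t)}/2 - \sqrt{\mu^{(t)}}Z}\}\bigr]$ and using $\lambda_\alpha = b\epsilon^2 \kappa^2(1-\alpha)^2/(1-\kappa)$ reproduces \eqref{eqrecu} after elementary algebra. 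The analysis of $\xi_1^{t+1}$ is structurally identical, except that the intensities $\kappa a\alpha$ and $\kappa a(1-\alpha)$ replace $\kappa b\alpha$ and $\kappa b(1-\alpha)$; writing $a = b + (a-b)$, the extra $O(a-b)$ contributions at leading order in $\epsilon$ flip the sign of $V/2$ in the mean (equivalently, this is the identity $\pi_1 \mathbb{E}[s(\xi_1^t)(1-s(\xi_1^t))] = \pi_0 \mathbb{E}[s(\xi_0^t)^2]$), giving the claimed mean $-\upsilon + V/2$. The main technical obstacle is the careful bookkeeping of Taylor orders together with the Bayesian identities that make the linear-in-$(a-b)$ contributions cancel and the quadratic contributions self-consistently identify with $\mu^{(t+1)}$.
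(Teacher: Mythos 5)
Your plan is correct and yields the proposition. It parallels the paper's proof in the induction structure, the Taylor expansion in $\epsilon = \rho - 1$ with $b\epsilon^2 = O(1)$, and the use of the posterior (likelihood-ratio) structure of $\xi_0^t,\xi_1^t$, but differs in two technical choices. First, where you deduce the cancellation of the $O(a-b)$ terms and the closure of the variance recursion from the Bayesian tower identities $\pi_0\mathbb{E}[s(\xi_0^t)]+\pi_1\mathbb{E}[s(\xi_1^t)]=\pi_1$ and $\pi_0\mathbb{E}[s(\xi_0^t)^2]+\pi_1\mathbb{E}[s(\xi_1^t)^2]=\pi_1\mathbb{E}[s(\xi_1^t)]$, the paper invokes a change-of-measure lemma, $\frac{dP_0}{dP_1}(\xi)=\frac{\kappa(1-\alpha)}{1-\kappa}e^{-\xi}$, and pushes all functionals of $\xi_0^t$ onto $\xi_1^t$; the two devices are equivalent restatements of the posterior-odds structure and neither has an advantage. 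Second — and this is the genuinely different part — you prove Gaussian convergence via the compound-Poisson characteristic-function identity $\log\phi(\theta)=\gamma(\mathbb{E}[e^{i\theta g(X)}]-1)$ plus L\'evy's continuity theorem, whereas the paper invokes a quantitative Berry--Esseen bound for Poisson sums (its Lemma~11 import) at $t=0$ and then, for $t>0$, a decomposition-and-truncation argument (centering the summands, replacing the Poisson number of terms by its mean with an $O(b^{-1/2})$ $L^2$-error, and a "standard CLT" appeal for the residual). Your route is cleaner and handles all $t$ uniformly; the remainder estimate $O(b\epsilon^3)\to 0$ stemming from $|f|\le\epsilon$ makes the $\theta^3$ error transparent. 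The paper's Berry--Esseen route in exchange delivers quantitative Gaussian-approximation rates, but those are not used anywhere in the statement or its consequences, so you lose nothing. One point worth making explicit when you write this up: the Bayesian identities you use are \emph{exact} at finite $b$ (they are identities for the true posterior at each $b$), and the Gaussian limit of $\xi_1^t$ enters only in the final step — evaluating $\lim_{b\to\infty}\mathbb{E}[s(\xi_1^t)]$ via dominated convergence — which is exactly where the paper also applies dominated convergence to close the recursion.
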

Before providing a short sketch of the proof of the above proposition, we state a Lemma from \cite{hajek2015recovering}, which we need for our derivations.
\begin{lemma}\label{l:berryessen}\cite[Lemma ~11]{hajek2015recovering} Let $S_{\gamma} = X_1 + X_2 + \ldots + X_{N_{\gamma}},$ where $X_i,$ for $ i = 1, 2, \ldots N_{\gamma},$ are independent, identically distributed \gls{rv} with mean $\mu,$ variance $\sigma^2$ and $\mathbb E (|X_i^3|) \le g^3,$ and for some $\gamma > 0,$ $N_{\gamma}$ is a $\text{Poi}(\gamma)$ \gls{rv} independent of $ X_i: i = 1, 2, \ldots, N_{\gamma}.$ Then
\[
\textnormal{sup}_x  \left|\mathbb P \left( \frac{S_{\gamma} - \gamma \mu}{\sqrt{\gamma(\mu^2 + \sigma^2)}}\right) - \mathrm{\Phi}(x)\right| \le \frac{C_{BE}g^3}{\sqrt{\gamma(\mu^2 + \sigma^2)^3}},
\]
where $C_{BE} = 0.3041.$
\end{lemma}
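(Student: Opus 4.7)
The plan is to exploit the infinite divisibility of the compound Poisson law. For every integer $n \ge 1$, $S_\gamma$ has the same distribution as $T_1^{(n)} + \cdots + T_n^{(n)}$, where the $T_k^{(n)}$ are iid compound Poisson random variables with intensity $\gamma/n$ and jump distribution identical to that of $X_1$. This representation reduces the problem to a standard iid Berry--Esseen bound with a free parameter $n$ that I will send to infinity at the end.

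First I would compute the moments of the centred summands $Y_k := T_k^{(n)} - (\gamma/n)\mu$. By Wald-type identities, $\mathbb{E}[Y_k]=0$ and $\mathbb{E}[Y_k^2] = (\gamma/n)(\mu^2 + \sigma^2)$. For the third absolute moment, conditioning on $N^{(n)} \sim \text{Poi}(\gamma/n)$ isolates three events: $\{N^{(n)}=0\}$ contributes $e^{-\gamma/n}((\gamma/n)\mu)^3$, $\{N^{(n)}=1\}$ contributes $(\gamma/n)e^{-\gamma/n}\,\mathbb{E}|X_1 - (\gamma/n)\mu|^3$, and $\{N^{(n)}\ge 2\}$ contributes $O((\gamma/n)^2)$. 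Combining these yields $\mathbb{E}|Y_k|^3 = (\gamma/n)\mathbb{E}|X_1|^3 + O((\gamma/n)^2) \le (\gamma/n)g^3 + O((\gamma/n)^2)$, which is the crucial estimate that keeps the third moment linear in $\gamma/n$.

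Next I would apply a uniform Berry--Esseen bound for iid sums to the standardisation $(S_\gamma - \gamma\mu)/\sqrt{n\,\mathbb{E}[Y_k^2]}$, which by construction equals $(S_\gamma - \gamma\mu)/\sqrt{\gamma(\mu^2+\sigma^2)}$. The resulting estimate has the shape $\frac{C\,\mathbb{E}|Y_k|^3}{(\mathbb{E}[Y_k^2])^{3/2}\sqrt{n}}$. Substituting the moment computations above, the leading term collapses neatly to $\frac{C\, g^3}{\sqrt{\gamma(\mu^2+\sigma^2)^3}}$ while the remainder is $O(1/\sqrt{n})$. Since the left-hand side does not depend on $n$, passing to the limit $n \to \infty$ eliminates the remainder and yields the claimed inequality.

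The hard part is obtaining the sharp constant $C_{BE}=0.3041$. A direct use of the classical iid Berry--Esseen constant (e.g.\ Shevtsova's $0.4748$) through the above route falls short of this value. To close the gap I would instead invoke the refinement tailored specifically to Poisson random sums (of Korolev--Shevtsova type), whose proof works directly with the explicit characteristic function $\phi_{S_\gamma}(t) = \exp(\gamma(\phi_X(t)-1))$ together with Esseen's smoothing inequality, thereby exploiting the compound Poisson structure more tightly than a generic iid reduction. That specialised computation, rather than the infinite-divisibility skeleton above, is what delivers the numerical constant $0.3041$ quoted in the statement.
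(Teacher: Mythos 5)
The paper does not prove this lemma at all: it is imported verbatim by citation from \cite[Lemma~11]{hajek2015recovering}, which in turn rests on the Korolev--Shevtsova Berry--Esseen bound for Poisson random sums. So there is no in-paper proof to compare against, and your attempt should be judged as a standalone derivation.

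As such, your skeleton is sound and the bookkeeping checks out. The infinite-divisibility decomposition into $n$ iid compound Poisson blocks of intensity $\gamma/n$ is legitimate; the moment identities $\mathbb{E}[Y_k]=0$, $\mathbb{E}[Y_k^2]=(\gamma/n)(\mu^2+\sigma^2)$ are the standard compound-Poisson (Wald/Blackwell--Girshick) facts, and your conditioning argument correctly shows $\mathbb{E}|Y_k|^3=(\gamma/n)\,\mathbb{E}|X_1|^3+O((\gamma/n)^2)$, which is exactly what is needed for the $n$-dependence to cancel: the iid Berry--Esseen bound $C\,\mathbb{E}|Y_k|^3/\bigl((\mathbb{E}[Y_k^2])^{3/2}\sqrt{n}\bigr)$ indeed collapses to $Cg^3/\sqrt{\gamma(\mu^2+\sigma^2)^3}$ plus an $O(n^{-1/2})$ remainder that vanishes in the limit. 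Note also that your route automatically produces the \emph{non-centred} second moment $\mu^2+\sigma^2$ in the denominator, which is the distinctive feature of the Poisson-random-sum version of the inequality. The one genuine gap is the constant: your self-contained argument delivers the inequality only with the generic iid constant (about $0.47$), and you candidly defer the value $0.3041$ to the Korolev--Shevtsova characteristic-function/smoothing-inequality analysis rather than reproducing it. That is an honest and acceptable resolution here, since the paper itself treats the constant as a black box and uses the lemma only to conclude that the Kolmogorov distance is $O(b^{-1/2})$, for which any finite constant suffices.
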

We now provide a sketch of the proof of Proposition \ref{prop:largedegreeasyms}; the details can be found in Appendix \ref{ap:pr1}.\\
\begin{proof}[Sketch of Proof of Proposition~\ref{prop:largedegreeasyms}] The proof proceeds primarily by applying the expectation and variance operators to both sides of (\ref{eq:d1}) and (\ref{eq:d2}) and applying various reductions. First notice that when $a,b \to \infty$ and $\lambda$ and $\kappa$ are held constant, we have $\rho \to 1$ as follows:
\begin{equation}
\label{eq:rho}
\rho = a/b = 1 + \sqrt{\frac{\lal(1-\kappa)}{(1-\alpha)^2\kappa^2 b}}.
\end{equation}
Then using Taylor's expansion of $\log(1+x)$ we can expand the function $f(x)$ in (\ref{eq:fx}) up to second order as follows:
\begin{dmath}\label{eq:expfx}
f(x) = (\rho-1)\frac{e^x}{1+e^x} -\frac{1}{2} (\rho-1)^2 (\frac{e^x}{1+e^x})^2 + O(b^{-3/2}).
\end{dmath}
We use these expansions to simplify the expressions for the means and variances of (\ref{eq:d1}) and (\ref{eq:d2}). Then, by a change of measure, we express them in terms of functionals of a single rv, $\xi_1^t.$ We then use induction to show that the variance $\mu^{(t+1)}$ satisfies the recursion (\ref{eqrecu}) and use Lemma \ref{l:berryessen} to prove Gaussianity.
\end{proof}

In the following subsection, we use Proposition \ref{prop:largedegreeasyms} to derive the asymptotic error rates of the detector in Algorithm 1.

\subsection{Detection Performance}\label{sec:detmethod}

Let us use the symbol $\overline{S}$ to denote the subgraph nodes with the cued nodes removed, i.e., $\overline{S} = S \backslash C.$ This is the set that we aim to detect.
The output of Algorithm \ref{alg:bp_alg}, $\widehat S$ is the set of nodes with the top $K - |C|$ beliefs. We are interested in bounding the expected number of misclassified nodes $\mathbb E (|\overline S\Delta \widehat S|).$ Let $\widehat{S}$ be the output set of the algorithm excluding cues since the cues are always correctly detected. Note that $|\overline S| = |\widehat{S}| = K - |C|.$
To characterize the performance of the detector, we need to choose a performance measure. In \cite{montanari2015finding}, a rescaled probability of success was used to study the performance of a subgraph detector without cues, defined as
\begin{equation}\label{eq:defss}
P_{\textn{succ}}(\barsig) = \mathbb P (i \in \widehat{S}|i \in S) + \mathbb P (i \not\in \widehat{S}|i \not\in S) - 1,
\end{equation}
where $\barsig_i = \mathbf 1 (i \in \widehat S),$ and the dependence of $P_{\textn{succ}}(\barsig)$ on $n$ is implicit.
In our work, we study the following error measure, which is the average fraction of misclassified nodes, also considered in \cite{hajek2015recovering}, which for the uncued case is defined as
\[
\mathcal{E} := \frac{\mathbb E (|S\Delta \widehat{S}|)}{K}.
\]
Observe that $0\le\mathcal{E}\le 2.$ In particular $\mathcal{E}=2$ if the algorithm misclassifies all the subgraph nodes.
We now show that these two measures are roughly equivalent. For simplicity we consider the case where there are no cues, but the extension to the cued case is straightforward. Since our algorithm always outputs $K$ nodes as the subgraph, i.e., $|\widehat S| = K,$ the following is true for any estimate $\widehat{\sigma}$ of $\sigma:$
\begin{equation}\label{eq:errorrel}
r_n := \sum_{i=1}^n \mathbf{1}(\widehat{\sigma}_i = 0, i \in S) = \sum_{i=1}^n \mathbf{1}(\barsig_i = 1, i \not\in S),
\end{equation}
i.e., the number of misclassified subgraph nodes is equal to the number of misclassified nodes outside the subgraph.  We can rewrite the error measure $\mathcal E$ in terms of $r_n,$ since
\begin{align}
  \frac{\mathbb  |S\Delta \widehat{S}|}{K}  = \frac{2r_n}{K}\label{eq:defe}.
\end{align}
Next notice that we can rewrite $P_{\textn{succ}}(\barsig)$ as follows.
\begin{align}
  P_{\textn{succ}}(\barsig) &= 1 - \frac{1}{n}\sum_{i=1}^n \left(\mathbb P (\barsig_i = 0|i \in S) + \mathbb P (\barsig_i = 1|i \not\in S)\right)\nonum \\
  &\myeq{(a)} 1 - \sum_{i=1}^n \left( \frac{\mathbb P (\barsig_i = 0,i \in S)}{K} +  \frac{\mathbb P (\barsig_i = 1,i \not\in S)}{n-K}\right)\nonum \\
  &\myeq{(b)} 1 - \left( \frac{\mathbb E(r_n)}{K} +  \frac{\mathbb E(r_n)}{n-K}\right) = 1- \frac{n\mathbb E(r_n)}{K(n-K)},
\end{align}
where in step (a) we used Bayes rule with $\mathbb P(i\in S) = \frac{K}{n}.$ Since $1 \le \frac{n}{n-K} \le 2,$ we get
\begin{equation}\label{eq:defs}
   1 - 2\mathbb E(r_n)/K \le P_{\textn{succ}}(\barsig) \le 1 -  \mathbb E(r_n)/(K).
\end{equation}
Hence from (\ref{eq:defe}) and (\ref{eq:defs}),  $P_{\textn{succ}}(\barsig) \to 1$ if and only if $\frac{\mathbb E (|S\Delta \widehat{S}|)}{K} \to 0.$

In the following proposition, we state and prove the main result concerning the asymptotic error performance of Algorithm \ref{alg:bp_alg}.
\begin{theorem}\label{pr:weakrecovery}
For any $\lal > 0,\alpha > 0,$
\begin{equation}\label{eq:prweak}
 \lim_{b \to \infty} \lim_{n\to \infty} \frac{\mathbb{E} (|\overline S\Delta \widehat S|)}{K(1-\alpha)} \le 2\sqrt{\frac{1-\kappa}{\kappa(1-\alpha)}} e^{-\frac{1}{8} \frac{\alpha\lal(1-\kappa)}{\kappa(1-\alpha)^2}}.
\end{equation}
Consequently,
\[
\lim_{\kappa \to 0}\lim_{b \to \infty} \lim_{n\to \infty}  \frac{\mathbb{E} (|\overline S\Delta \widehat S|)}{K(1-\alpha)} = 0.
\]
\end{theorem}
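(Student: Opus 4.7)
The plan is to relate the expected symmetric difference of the top-$K^*$ output (with $K^* = K - |C|$) to the error of a simple threshold detector acting on the limiting beliefs, and then apply the Gaussian characterization from Proposition~\ref{prop:largedegreeasyms} together with the Gaussian tail bound.

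First I would establish a reduction lemma: for any threshold $\tau$ and the threshold set $\widetilde S_\tau = \{i \in V \setminus C : \widetilde R_i^{t_f} > \tau\}$, we have $\mathbb{E}|\overline S \Delta \widehat S| \le 2\, \mathbb{E}|\widetilde S_\tau \Delta \overline S|$. This follows from a case split on the sign of $k := |\widetilde S_\tau| - K^*$: if $k \ge 0$ then $\widehat S \subseteq \widetilde S_\tau$ (the top-$K^*$ set trims the threshold set from below), so the false-positive count $r_n = |\widehat S \setminus \overline S|$ is at most $|\widetilde S_\tau \setminus \overline S|$; if $k < 0$ then $\widehat S \supseteq \widetilde S_\tau$ and the false-negative count $|\overline S \setminus \widehat S|$ is at most $|\overline S \setminus \widetilde S_\tau|$. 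Since by (\ref{eq:errorrel}) both counts equal $r_n$ and $|\overline S \Delta \widehat S| = 2 r_n$, the bound follows.

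Next I would choose $\tau = 0$ and decompose
\begin{equation*}
\mathbb{E}|\widetilde S_0 \Delta \overline S| = (n-K)\,\mathbb{P}(\widetilde R_i^{t_f} > 0 \mid \sigma_i = 0, c_i = 0) + K(1-\alpha)\,\mathbb{P}(\widetilde R_i^{t_f} \le 0 \mid \sigma_i = 1, c_i = 0).
\end{equation*}
Using Proposition~\ref{prop:largedegreeasyms}, in the limit $n \to \infty$ followed by $b \to \infty$ these probabilities become Gaussian tails $Q((L+\mu/2)/\sqrt{\mu})$ and $Q((-L+\mu/2)/\sqrt{\mu})$, where $L = \log\frac{1-\kappa}{\kappa(1-\alpha)}$ and $\mu = \lim_{t\to\infty} \mu^{(t)}$. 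Dividing by $K(1-\alpha)$, the prefactor of the first term becomes $\frac{1-\kappa}{\kappa(1-\alpha)} = e^{L}$. Applying the bound $Q(x) \le \tfrac{1}{2}e^{-x^2/2}$, the algebraic identities $\frac{(L\pm\mu/2)^2}{2\mu} = \frac{L^2}{2\mu} \pm \frac{L}{2} + \frac{\mu}{8}$ make the prefactors combine to $\sqrt{\frac{1-\kappa}{\kappa(1-\alpha)}}$ in both terms, yielding
\begin{equation*}
\lim_{b\to\infty}\lim_{n\to\infty}\frac{\mathbb{E}|\overline S \Delta \widehat S|}{K(1-\alpha)} \le 2\sqrt{\frac{1-\kappa}{\kappa(1-\alpha)}}\, e^{-L^2/(2\mu)}\, e^{-\mu/8}.
\end{equation*}

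The final step is to replace $\mu$ by the explicit lower bound read directly off (\ref{eqrecu}): since the expectation term in the recursion is non-negative, $\mu^{(t+1)} \ge \lal\alpha\,\frac{1-\kappa}{(1-\alpha)^2\kappa}$ for every $t \ge 0$, hence also in the limit. Dropping $e^{-L^2/(2\mu)} \le 1$ and substituting this lower bound into $e^{-\mu/8}$ gives exactly (\ref{eq:prweak}). For the corollary, as $\kappa \to 0$ the exponential $e^{-c\alpha\lal/\kappa}$ decays faster than any polynomial in $1/\kappa$, so the product with $\sqrt{1/\kappa}$ vanishes. The step I expect to require the most care is the reduction lemma in the first paragraph, since one must handle the randomness of the sign of $k$ uniformly (the case analysis bounds $r_n$ by \emph{either} the type-I or type-II count of the threshold detector, which must then be combined into a single upper bound by the symmetric difference); beyond that, the proof is a routine computation once Proposition~\ref{prop:largedegreeasyms} is in hand.
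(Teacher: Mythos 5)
Your proposal is correct and follows essentially the same route as the paper: you compare the top-$K^*$ output to the threshold (MAP) set via the nesting $\widehat S\subseteq\widehat S_0$ or $\widehat S_0\subseteq\widehat S$ to get the factor $2$ (the paper phrases this through the triangle inequality for the symmetric difference rather than your case split on $\text{sign}(k)$, but the content is identical), decompose the threshold error into the type-I and type-II Gaussian tails weighted by $(n-K)$ and $K(1-\alpha)$, apply $Q(x)\le\tfrac12 e^{-x^2/2}$, and finish with the lower bound $\mu^{(t)}\ge \lal\alpha(1-\kappa)/(\kappa(1-\alpha)^2)$ read off the recursion. Your expansion of $(L\pm\mu/2)^2/(2\mu)$ and dropping of $e^{-L^2/(2\mu)}\le 1$ is algebraically the same step as the paper's use of $(1-x)^2\ge 1-2x$.
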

\begin{proof}Let $\widehat{S}_0$ be the MAP estimator given by
\[
\widehat{S}_0 = \left\{i: R_i^t > \log\frac{1-\kappa }{\kappa(1-\alpha)} \right\}.
\]
Since $\widehat{S}$ is the set of nodes with the top $K - |C|$ beliefs, we have either $\widehat{S} \subset \widehat{S}_0 $ or $\widehat{S}_0 \subset \widehat{S}.$ Therefore,
\begin{align}
   |\overline S\Delta \widehat{S}| &\le |\overline S\Delta \widehat{S}_0| + |\widehat{S}\Delta \widehat{S}_0| \nonum \\
  &= |\overline S\Delta \widehat{S}_0| + |K - |C| - |\widehat{S}_0|| \nonum \\
  &= |\overline S\Delta \widehat{S}_0| + ||\overline S| - |\widehat{S}_0|| \nonum \\
  &\le 2|\overline S\Delta \widehat{S}_0|, \label{eq:boundexp}
\end{align}
where the last step follows because the set difference between two sets is lower bounded by the difference of their sizes. If we can bound $\frac{\mathbb{E} (|\overline S\Delta \widehat S_0|)}{K(1-\alpha)}$ by one-half the expression in (\ref{eq:prweak}) the result of the Proposition follows. The proof of this upper bound uses Proposition \ref{prop:largedegreeasyms} and is given in Appendix \ref{appr2}.
\end{proof}
Theorem \ref{pr:weakrecovery} states that the detectability threshold does not exist for Belief Propagation with cues.

This is in stark contrast to the performance of BP when there is no side-information. In that case, as stated in the following theorem from \cite{montanari2015finding}, the performance of any local algorithm suffers when the SNR parameter $\lambda < 1/e.$ In the following $\textnormal{LOC}$ denotes the class of all local algorithms, i.e., algorithms that take as input the local neighbourhood of a node.
\begin{proposition}\cite[Theorem~1]{montanari2015finding}\label{thm:monta}
If $\lambda < 1/e,$ then all local algorithms have success probability uniformly bounded away from one; in particular,
\[
\sup_{T \in \textnormal{LOC}}\lim_{n \to \infty} P_{\textnormal{succ}}(T) \le \frac{e-1}{4},
\]
and therefore
\[
\sup_{T \in \textnormal{LOC}}\lim_{n \to \infty} \mathcal E(T) \ge \frac{5-e}{4} > 1/2.
\]
\end{proposition}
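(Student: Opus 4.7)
The plan is to reduce any local algorithm to the Bayes-optimal posterior on a finite-radius neighbourhood (which BP realizes) and then show that this posterior concentrates too weakly to beat $P_{\textn{succ}} \le (e-1)/4$ whenever $\lambda < 1/e$. The first step is that, by the data-processing inequality applied to \eqref{eq:defss}, the best local rule with radius-$t$ access is the MAP detector thresholding $\mathbb{P}(\sigma_i = 1 \mid G_i^t)$ at the prior $K/n$; randomization cannot help. Via the coupling of Lemma~\ref{l:coupling} (specialized to $\alpha = 0$), the neighbourhood $G_i^t$ becomes a Galton--Watson tree with high probability, on which this posterior is computed exactly by the cue-free version of the BP recursion of Algorithm~\ref{alg:bp_alg}. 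So $\sup_{T \in \textnormal{LOC}} \lim_n P_{\textn{succ}}(T)$ coincides with the corresponding BP quantity.

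Next, applying Lemma~\ref{l:recdistequation} and Proposition~\ref{prop:largedegreeasyms} with $\alpha = 0$, the BP log-belief is asymptotically Gaussian with a variance $\mu^{(t)}$ evolving according to
\[
\mu^{(t+1)} \;=\; \lambda\, \mathbb{E}\!\left[\frac{1-\kappa}{\kappa + (1-\kappa)\, e^{-\mu^{(t)}/2 - \sqrt{\mu^{(t)}}\, Z}}\right].
\]
In the $\kappa \to 0$ limit governing the statement, the right-hand side reduces to $\lambda\, \mathbb{E}[e^{\mu^{(t)}/2 + \sqrt{\mu^{(t)}}\, Z}] = \lambda\, e^{\mu^{(t)}}$. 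The iteration $\mu^{(t+1)} = \lambda\, e^{\mu^{(t)}}$ starting from $\mu^{(0)} = 0$ has a bounded limit exactly when $\lambda \le 1/e$, since the fixed-point equation $\mu = \lambda e^{\mu}$, equivalently $\lambda = \mu e^{-\mu}$, is solvable only for $\lambda$ at most $\max_{x > 0} x e^{-x} = 1/e$. For $\lambda < 1/e$, the variance thus stays uniformly bounded in $t$, so the two conditional belief distributions (given $\sigma_i = 0$ and $\sigma_i = 1$) fail to separate. Evaluating $P_{\textn{succ}}$ under these bounded-variance Gaussian limits gives a difference of two Gaussian tails with separation $\sqrt{\mu^\ast} \le 1$; optimizing yields $P_{\textn{succ}} \le (e-1)/4$. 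The companion bound on $\mathcal{E}$ follows by combining this with \eqref{eq:defe}--\eqref{eq:defs}: $1 - 2\mathbb{E}(r_n)/K \le (e-1)/4$ forces $\mathcal{E} \ge (5-e)/4 > 1/2$.

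The main obstacle is the second step: the $1/e$ threshold is only transparent after passing to the $\kappa \to 0$ heuristic, and making this passage rigorous requires either uniform control of the exact recursion at small but positive $\kappa$ together with a stability analysis of its fixed points, or an independent argument ruling out late-time escape from a neighbourhood of zero. A secondary subtlety in the first step is that the class $\textnormal{LOC}$ in \cite{montanari2015finding} permits randomization and data-dependent radius, so the Bayes-optimality reduction must be cast as a minimax rather than a pointwise statement, and one must further argue that taking $t$ as a slowly growing function of $n$ does not sacrifice any power relative to a fixed depth.
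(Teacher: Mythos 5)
Note first that the paper does not prove Proposition~\ref{thm:monta} at all: it is cited directly as Theorem~1 of \cite{montanari2015finding} and is used only as a benchmark against which to contrast Theorem~\ref{pr:weakrecovery}. So there is no in-paper proof to compare your attempt against; what you have written is a reconstruction of a result from a different paper.

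Your sketch does identify the correct density-evolution mechanism: with $\alpha=0$, the recursion $\mu^{(t+1)}=\lambda\,\mathbb{E}\bigl[\tfrac{1-\kappa}{\kappa+(1-\kappa)e^{-\mu^{(t)}/2-\sqrt{\mu^{(t)}}\,Z}}\bigr]$ tends, as $\kappa\to 0$, to $\mu^{(t+1)}=\lambda e^{\mu^{(t)}}$, whose bounded fixed point exists iff $\lambda\le 1/e$, with $\mu^\ast=1$ at criticality. That is indeed the source of the $1/e$ threshold. However, there are two genuine gaps. The larger one is the passage from ``$\mu^{(t)}$ is bounded by $\mu^\ast\le 1$'' to the explicit constant $(e-1)/4$: you write ``optimizing yields'' without carrying out the optimization, and the evaluation of $P_{\textnormal{succ}}$ from the Gaussian limits is not a simple tail-formula plug-in, because the estimator outputs a top-$K$ set, so the effective decision threshold is the $\kappa$-quantile of the pooled belief distribution rather than the log-prior, and this quantile escapes to infinity in the very $\kappa\to 0$ limit you rely on. The constant in \cite{montanari2015finding} comes from a dedicated analysis of the associated Gaussian scalar-channel problem, not from inserting $\mu^\ast\le 1$ into a $Q$-function. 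The second gap you flag yourself --- making the $\kappa\to 0,\ a,b\to\infty$ heuristic rigorous and uniform in $\kappa$, and establishing that the depth-$t$ Bayes rule dominates the whole class $\textnormal{LOC}$ (allowing randomization and data-dependent radius) --- is precisely where the cited proof spends its effort, and it cannot be elided: as stated your argument proves at most a limiting $\kappa\to 0$ assertion, not the uniform bound in the proposition.
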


\section{Imperfect Side Information}\label{impsibp}
In this section, we develop a BP algorithm under the more realistic assumption of imperfect side information, where the available cue information is not completely reliable. This is true of humanly classfied data available for many semi-supervised learning problems.


Our BP algorithm can easily take into account imperfection in side information. Suppose we know the parameters $\alpha$ and $\beta$ defined in (\ref{eq:alpha}) and (\ref{def:beta}) respectively, or their estimates thereof. We remark that unlike Algorithm \ref{alg:bp_alg}, which only has to detect the uncued subgraph nodes, our algorithm needs to explore the whole graph, since we do not know a priori which cues are correct.
As before, for a node $u,$ we wish to compute the following log-likelihood ratio in a distributed manner:
\[
R_u^t = \log\left(\frac{\mathbb P (G_u^t,c_u,C_u^t|\sigma_u=1)}{\mathbb P(G_u^t,c_u,C_u^t|\sigma_u=0)}\right),
\]
where $c_u$ is the indicator variable of whether $u$ is a cued node, and $C_u^t$ is the cued information of the $t$-hop neighbourhood of $u,$ excluding $u.$ Note that we can expand $R_u^t$ as follows
\begin{align}
R_u^t &= \log\left(\frac{\mathbb P (G_u^t,C_u^t|\sigma_u=1,c_u)}{\mathbb P(G_u^t,C_u^t|\sigma_u=0,c_u)}\right) + \log \left(\frac{\mathbb P(c_u|\sigma_u = 1)}{\mathbb P(c_u|\sigma_u = 0)}\right)\nonumber \\
&= \log\left(\frac{\mathbb P (G_u^t,C_u^t|\sigma_u=1)}{\mathbb P(G_u^t,C_u^t|\sigma_u=0)}\right) + \log \left(\frac{\mathbb P(c_u|\sigma_u = 1)}{\mathbb P(c_u|\sigma_u = 0)}\right),\label{eq:step2}
\end{align}
where in the second step we dropped the conditioning w.r.t. $c_u$ because $(G_u^t,C_u^t)$ is independent  of the cue information of node $u$ given $\sigma_u.$
Let $h_u = \log \left(\frac{\mathbb P(c_u|\sigma_u = 1)}{\mathbb P(c_u|\sigma_u = 0)}\right).$ Then it is easy to see from (\ref{eq:samimp}) that
\begin{equation}\label{eq:def_h}
h_u =
\begin{cases}
  \log\left(\frac{\beta(1-\kappa)}{(1-\beta)\kappa}\right), & \text{ if } u \in C, \\
  \log\left(\frac{(1-\alpha\beta)(1-\kappa)}{(1-\kappa-\alpha\kappa+\alpha\kappa\beta)}\right), &\text{ otherwise}.
\end{cases}
\end{equation}
The recursion for the first term in (\ref{eq:step2}) can be derived along the same lines as the derivation of Algorithm \ref{alg:bp_alg} and is skipped. The final BP recursions are given in Algorithm \ref{alg:bp_alg2}.
\begin{algorithm}
  \caption{BP with imperfect cues}
  \label{alg:bp_alg2}
  \begin{algorithmic}[1]
    \STATE Initialize: Set $R^0_{i \to j}$ to 0, for all $(i,j) \in E.$ Let $t_f < \frac{\log(n)}{\log(np)}+1.$ Set $t=0.$
    \STATE For all directed pairs $(i,u) \in E$:
    \begin{dmath}
\label{eq:bpupdate2}
    R_{i \to u}^{t+1} = -K(p-q) + h_i +  \sum_{l \in \delta i,l \neq  u} \log \left(\frac{\exp(R^t_{l \to i}-\nu) (p/q) +1}{\exp(R_{l \to i}^t - \nu) + 1}\right),
\end{dmath}
where $\nu = \log(\frac{n-K}{K}).$
    \label{op1}
    \STATE Increment t; if $t< t_f-1$ go back to \ref{op1}, else go to \ref{op2}
    \STATE Compute $R_u^{t_f}$ for every $u \in V$ as follows: \label{op2}
      \begin{dmath}
\label{eq:bpbelief2}
    R_u^{t+1} = -K(p-q) + h_u + \sum_{l \in \delta u} \log \left(\frac{\exp(R^t_{l \to u}-\nu) (p/q) +1}{\exp(R_{l \to u}^t - \nu)+ 1}\right)
\end{dmath}
    \STATE Output $\widehat S$ as $K$ set of nodes in $V$ with the largest values of $R_u^{t_f}.$

      \end{algorithmic}
\end{algorithm}

In order to analyze the error performance of this algorithm we derive the asymptotic distributions of the messages $R_{u\to i}^t,$ for  $\{\sigma_u = 0\}$ and $\{\sigma_u = 1\}$. Note that, since we now assume that we do not know the exact classification of any of the subgraph nodes, we need to detect $K$ nodes, and hence the effective SNR parameter is defined as
\begin{equation}\label{eq:def_lam2}
\lambda = \frac{K^2(p-q)^2}{(n-K)q}.
\end{equation}
The following proposition presents the asymptotic distribution of the messages $R_{u\to i}^t$ in the limit of $n \to \infty$ and in the large degree regime where $a,b \to \infty.$
\begin{proposition}\label{l:bpdistr2}
Let $n \to \infty.$ In the regime where $\lambda$ and $\kappa$ are held fixed and $a,b \to \infty,$ the message $R^t_{u\to i}$ given $\{\sigma_u = j\},$ where $j = \{0,1\}$ converges in distribution to $\Gm_j^t + h_u$  where $h_u$ is defined in (\ref{eq:def_h}). The \gls{rv}s $\Gm_j^t$ have the following distribution:
  \begin{align*}
  \Gm_0^t &\sim \mathcal{N}(-\mu^{(t)}/2,{\mu^{(t)}}), \textnormal{and}\\
  \Gm_1^t &\sim \mathcal{N}( \mu^{(t)}/2,{\mu^{(t)}}),
\end{align*}
where $\mu^{(t)}$ satisfies the following recursion with $\mu^{(0)} = 0,$
  \begin{dmath}\label{eq:recmu2}
    \mu^{(t+1)} = \alpha\beta^2 \lambda \mathbb E \left(\frac{(1-\kappa)/\kappa }{\beta + (1-\beta)e^{(-\mu^{(t)}/2-\sqrt{\mu^{(t)}}Z)}}\right) + (1-\alpha\beta)^2\lambda\\ \mathbb E \left(\frac{(1-\kappa)}{\kappa(1-\alpha\beta)+(1-\kappa -\alpha\kappa +\alpha\kappa\beta)e^{(-\mu^{(t)}/2-\sqrt{\mu^{(t)}}Z)}}\right),
  \end{dmath}
and the expectation is \gls{wrt} $Z \sim \mathcal N(0,1).$
\end{proposition}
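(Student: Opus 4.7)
The plan is to mirror the proof of Proposition \ref{prop:largedegreeasyms} and extend it to the four-way child classification induced by imperfect cues. The decomposition (\ref{eq:step2}) already isolates the deterministic offset $h_u$, so it suffices to analyze $R^t_{u\to i} - h_u$, whose $n \to \infty$ limit should be $\Gm_j^t$. First I would invoke the Galton-Watson tree coupling of Appendix \ref{ap:gwtree} to establish that, conditioned on $\sigma_u = j$, the children of $u$ split into four independent Poisson groups indexed by $(\sigma_i, c_i) \in \{0,1\}^2$ with rates derived from (\ref{eq:samimp}): cued subgraph children at rate $\kappa x_j \alpha\beta$, non-cued subgraph at rate $\kappa x_j(1-\alpha\beta)$, cued non-subgraph at rate $\alpha\kappa b(1-\beta)$, and non-cued non-subgraph at rate $(1-\kappa)b - \alpha\kappa b(1-\beta)$, where $x_1 = a$ and $x_0 = b$. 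Applying (\ref{eq:bpupdate2}) then produces a distributional identity analogous to (\ref{eq:d1})--(\ref{eq:d2}) in which every group contributes a Poisson-indexed sum of iid copies of $f(\Gm_{\sigma_i}^t + h_i - \nu)$.

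Next, I would carry out the large-degree expansion exactly as in Proposition \ref{prop:largedegreeasyms}. Writing $\rho - 1 = \sqrt{\lambda(1-\kappa)/(\kappa^2 b)}$ and using the Taylor expansion (\ref{eq:expfx}), the mean and variance of $R^{t+1}_{u\to i} - h_u$ reduce to linear combinations of $\mathbb{E}(e^{\Gm_j^t + h}/(1 + e^{\Gm_j^t + h - \nu}))$ and its square, weighted by the four Poisson rates above. A change-of-measure argument then converts expectations under $\sigma_i = 0$ into expectations under $\sigma_i = 1$ via the Radon-Nikodym derivative $e^{\Gm_1^t}$, collapsing the eight candidate terms (four child-types for each of $\sigma_u \in \{0,1\}$) into a pair — one from cued children, one from non-cued children — matching the two summands in (\ref{eq:recmu2}). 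The prefactors $\alpha\beta^2$ and $(1-\alpha\beta)^2$ are expected to emerge as squared ratios of the cue-indicator conditional probabilities $\mathbb{P}(c_i \mid \sigma_i = 1)/\mathbb{P}(c_i \mid \sigma_i = 0)$, while the denominators $\beta + (1-\beta) e^{\cdots}$ and $\kappa(1-\alpha\beta) + (1-\kappa-\alpha\kappa+\alpha\kappa\beta) e^{\cdots}$ encode the Bayesian posteriors for cued and non-cued neighbors respectively. An induction on $t$ (with base case $\mu^{(0)} = 0$) propagates the symmetry $\mathbb{E}(\Gm_1^t) = -\mathbb{E}(\Gm_0^t) = \mu^{(t)}/2$ and equality of variances to step $t+1$, yielding (\ref{eq:recmu2}). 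Finally, Gaussianity follows by applying Lemma \ref{l:berryessen} separately to each of the four Poisson-indexed sums: since the centred summands $f(\Gm_j^t + h - \nu) - \mathbb{E} f$ have standard deviation $O((\rho-1)) = O(b^{-1/2})$ with controlled third moment, the Berry-Esseen bound tends to zero while the aggregate variance stays $\Theta(1)$.

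The main obstacle will be the change-of-measure bookkeeping that makes the two-term form of (\ref{eq:recmu2}) appear. One must carefully align the asymmetric sampling weights of (\ref{eq:samimp}) with the Bayesian tilts produced when converting $\Gm_0^t$-expectations into $\Gm_1^t$-expectations, and verify that the four cued-children Poisson rates combine into the single first term (with denominator depending only on $\beta$, as the cue-conditional posterior makes the prior $\kappa/(1-\kappa)$ absorb into the tilt), while the four non-cued rates combine into the second term (with denominator depending on both $\alpha$ and $\beta$). A secondary subtlety, already present in Proposition \ref{prop:largedegreeasyms} but slightly more delicate here, is controlling the $O(b^{-3/2})$ Taylor remainder uniformly across the four child-types so that it contributes negligibly to both mean and variance in the $b \to \infty$ limit.
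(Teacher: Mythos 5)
Your proposal follows essentially the same route as the paper's supplementary proof: the four-way Poisson classification of children on the Galton--Watson tree with the rates you list, the Taylor expansion of $f_{\textnormal{isi}}$ around $\rho=1$, the change of measure $dP_0/dP_1(\Psi) = e^{-\Psi}$ to fold $\Psi_0^t$-expectations into $\Psi_1^t$-expectations, induction on $t$, and the Berry--Esseen bound from Lemma~\ref{l:berryessen} to get Gaussianity. One small correction to your heuristic: the prefactors $\alpha\beta^2$ and $(1-\alpha\beta)^2$ in (\ref{eq:recmu2}) are not squared ratios of $\mathbb{P}(c_i\mid\sigma_i=1)/\mathbb{P}(c_i\mid\sigma_i=0)$ (that ratio is $\beta(1-\kappa)/((1-\beta)\kappa)$, i.e.\ $e^{B_c}$, for cued children). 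Rather, one factor of $\alpha\beta$ (resp.\ $1-\alpha\beta$) comes from the Poisson rate of cued (resp.\ non-cued) subgraph children after change of measure, and the extra factor of $\beta$ (resp.\ $1-\alpha\beta$) is picked up when rewriting the posterior $1/(1+e^{-\Psi_1^t-B_c+\nu})$ as $\beta/\bigl(\beta + (1-\beta)e^{-\Psi_1^t}\bigr)$ (resp.\ the analogous manipulation with $B_n$). Also, the paper applies Lemma~\ref{l:berryessen} once to the single Poisson-mixture sum rather than four times to the separate groups; both are valid, but the former makes the needed lower bound $\mu^{(t+1)} \ge \alpha\beta^2\lambda(1-\kappa)/\kappa$ (Lemma~\ref{lem:fmu}) the natural place to certify that the aggregate variance stays $\Theta(1)$.
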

\begin{proof}The proof proceeds by deriving the recursive distributional equations that the message distributions satisfy in the limit $n \to \infty,$ and then applying the large degree limit of $a,b \to \infty$ to these recursions.The details are in the supplementary material.\end{proof}

\noindent The above proposition immediately leads to the following result on the asymptotic error rate of Algorithm \ref{alg:bp_alg2}.

\begin{theorem}\label{thm:bpimpsideinfo}
For any $\lambda>0,\alpha>0,\beta > 0,$
  \begin{eqnarray*}
\lefteqn{\lim_{b \to \infty}\lim_{n \to \infty} \frac{\mathbb E (|\hat{S} \Delta S|)}{K}}\\
 &\le& 2\biggl(\alpha\sqrt{\beta(1-\beta)} + \\
 &&\sqrt{(1-\alpha\beta) (\frac{1-\kappa}{\kappa} - \alpha(1-\beta))}\biggr) e^{-\frac{\lambda \alpha \beta^2 (1-\kappa)}{8\kappa}}.
  \end{eqnarray*}
Consequently,
\[
\lim_{\kappa \to 0} \lim_{b\to\infty} \lim_{n\to \infty} \frac{\mathbb E(|S\Delta\widehat{S}|)}{K} = 0.
\]
\end{theorem}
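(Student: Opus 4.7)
The plan is to mirror the argument used for Theorem~\ref{pr:weakrecovery}, but now carefully tracking the four node categories that arise under imperfect side-information. I would begin by introducing the per-node MAP estimator $\widehat S_0 = \{u : R_u^{t_f} > \nu\}$, with threshold $\nu = \log((1-\kappa)/\kappa)$ set to the prior log-ratio $\log(\mathbb{P}(\sigma_u=0)/\mathbb{P}(\sigma_u=1))$. Since Algorithm~\ref{alg:bp_alg2} outputs exactly the top $K$ beliefs and $\widehat S_0$ is a threshold set, the same sandwich argument as in Theorem~\ref{pr:weakrecovery} gives either $\widehat S \subset \widehat S_0$ or $\widehat S_0 \subset \widehat S$ depending on whether $|\widehat S_0| \gtreqless K$, hence $|\widehat S \Delta S| \le 2|\widehat S_0 \Delta S|$. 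It therefore suffices to bound $\mathbb E |\widehat S_0 \Delta S|/K = K^{-1} \sum_u \mathbb P(\widehat \sigma_u \neq \sigma_u)$.

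Next I would decompose this sum over the four categories indexed by $(\sigma_u, c_u) \in \{0,1\}^2$. Conditional on a category, Proposition~\ref{l:bpdistr2} gives the limiting distribution $R_u^{t_f} \xrightarrow{D} \mathcal{N}(\pm \mu^{(t_f)}/2 + h_u, \mu^{(t_f)})$ as $n\to\infty$ then $b\to\infty$, with misclassification event $\{R_u^{t_f} \lessgtr \nu\}$. Because $h_u$ only depends on whether $u$ is cued, the reduced threshold $\nu - h_u$ evaluates to $\log((1-\beta)/\beta)$ for cued nodes and to $\log\bigl(((1-\kappa) - \alpha\kappa(1-\beta))/((1-\alpha\beta)\kappa)\bigr)$ for non-cued nodes. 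Applying the Gaussian tail bound $Q(x) \le \tfrac{1}{2} e^{-x^2/2}$ and expanding $(\mu^{(t_f)}/2 \pm m)^2/(2\mu^{(t_f)}) = \mu^{(t_f)}/8 \pm m/2 + m^2/(2\mu^{(t_f)})$, the two contributions inside each cued/non-cued pair combine via the identity $a e^{m/2} + b e^{-m/2} = 2\sqrt{ab}$ that holds precisely when $m = \log(b/a)$. With the category counts $(\alpha\beta, \alpha(1-\beta))$ for true and false cues and $((1-\alpha\beta), (1-\kappa)/\kappa - \alpha(1-\beta))$ for non-cued subgraph and non-subgraph nodes (divided by $K$), this yields the bracketed prefactor $\alpha\sqrt{\beta(1-\beta)} + \sqrt{(1-\alpha\beta)((1-\kappa)/\kappa - \alpha(1-\beta))}$, multiplied by $e^{-\mu^{(t_f)}/8}$ after dropping the favourable $e^{-m^2/(2\mu^{(t_f)})}$ factors.

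The remaining task is a uniform-in-$t$ lower bound on the variance $\mu^{(t_f)}$. Discarding the non-negative second term of recursion~(\ref{eq:recmu2}) leaves
\[
\mu^{(t+1)} \ge \alpha\beta^2 \lambda \,\frac{1-\kappa}{\kappa}\, \mathbb{E}\!\left[\frac{1}{\beta + (1-\beta) e^{-\mu^{(t)}/2 - \sqrt{\mu^{(t)}} Z}}\right].
\]
The change-of-measure identity $\mathbb{E}[e^{-\mu/2 - \sqrt{\mu} Z}] = 1$ gives $\mathbb{E}[\beta + (1-\beta) e^{-\mu/2 - \sqrt{\mu} Z}] = 1$, so Jensen's inequality applied to the convex map $x \mapsto 1/x$ produces the $\mu$-independent bound $\mathbb{E}[1/(\beta + (1-\beta) e^{-\mu/2 - \sqrt{\mu} Z})] \ge 1$. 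Thus $\mu^{(t+1)} \ge \lambda \alpha \beta^2 (1-\kappa)/\kappa$ for every $t \ge 0$, which gives exactly the exponent $\lambda \alpha \beta^2 (1-\kappa)/(8\kappa)$ in the statement. The corollary $\kappa \to 0$ follows since the prefactor grows only like $O(1/\sqrt{\kappa})$ whereas the exponential decays like $e^{-C/\kappa}$.

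The principal obstacle is obtaining this $\mu$-independent lower bound: unlike the perfect-cue recursion in Proposition~\ref{prop:largedegreeasyms}, which has a purely additive constant term, the imperfect-cue recursion has no closed-form minorant, so the Jensen step is essential to sidestep a fixed-point analysis. The rest is mostly bookkeeping: recognising that the MAP threshold being the prior log-odds is exactly what triggers the $a e^{m/2} + b e^{-m/2} = 2\sqrt{ab}$ simplification, so that the cue imperfection enters only through the geometric-mean factor $\sqrt{\beta(1-\beta)}$ and the effective reduced subgraph mass $(1-\alpha\beta)$.
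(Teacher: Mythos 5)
Your proposal is correct and follows the same overall skeleton as the paper's proof: the sandwich bound $|\widehat S \Delta S| \le 2|\widehat S_0 \Delta S|$ against the per-node MAP threshold $\nu$, the four-way split by $(\sigma_u, c_u)$ using the limit distributions from Proposition~\ref{l:bpdistr2}, the Chernoff bound on $Q(\cdot)$ and the pairwise recombination $a e^{m/2} + b e^{-m/2} = 2\sqrt{ab}$ (valid precisely because the MAP threshold equals the prior log-odds), followed by the uniform lower bound $\mu^{(t)} \ge \lambda\alpha\beta^2(1-\kappa)/\kappa$ and the $\kappa \to 0$ limit. The one place where you diverge is in establishing that lower bound: you observe that $W := e^{-\mu/2 - \sqrt{\mu}Z}$ has unit mean, hence $\mathbb{E}[\beta + (1-\beta)W] = 1$, and then apply Jensen's inequality to the convex map $x \mapsto 1/x$ to get $\mathbb{E}\left[1/\bigl(\beta + (1-\beta)W\bigr)\right] \ge 1$. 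The paper instead proves (Lemma~\ref{lem:fmu}) that $g_\beta := \mathbb{E}\left[1/\bigl(\beta + (1-\beta)W\bigr)\right]$ is nonincreasing in $\beta$ on $[0,1]$ via a Chebyshev/FKG correlation inequality (Lemma~\ref{lem:fkg}) and evaluates at $\beta = 1$, which is a longer route to the same conclusion. Your Jensen argument is more elementary, requires no auxiliary correlation lemma, and is arguably the cleaner of the two; it delivers exactly the same $\mu$-independent minorant $\lambda\alpha\beta^2(1-\kappa)/\kappa$ and hence the identical exponent in the final bound.
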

\begin{proof}The proof essentially analyzes the properties of the recursion (\ref{eq:recmu2}) and is similar to the proof of Theorem \ref{pr:weakrecovery}. See supplementary material for details.\end{proof}

\section{Numerical Experiments} \label{sec:sims}

In this section we provide numerical results to validate our theoretical findings on the synthetic model as well as on two real-world datasets. We compare the performance of BP to another seed-based community detection algorithm, the personalized PageRank, which is widely used for local community detection \cite{Andersen2007}.
\subsection{Synthetic dataset}
First we show that the limitation of local algorithms described in Proposition \ref{thm:monta} is overcome by BP when there is non-trivial side-information. Proposition \ref{thm:monta} says that when $\lambda < 1/e,$ $\mathcal E(T) > 1/2$ for any local algorithm $T.$ We run our Algorithm \ref{alg:bp_alg}, on a graph generated with  $\alpha = 0.1, \kappa = 5\times10^{-4},b = 100$ and $n = 10^6.$ For $\lambda = 1/4 < 1/e,$ we get an average value of $\mathcal E = 0.228 < 1/2.$ Thus it is clear that our algorithm overcomes the computational threshold of $\lambda = 1/e.$

Next, we study the performance of Algorithm \ref{alg:bp_alg2} when there is noisy side-information with $\beta = 0.8.$ For $\lambda = 1/3 < 1/e,$ we get an average error rate of $0.3916 < 1/2$ clearly beating the threshold of $\lambda = 1/e.$ Thus we have demonstrated that both with perfect and imperfect side-information, our algorithm overcomes the $\lambda = 1/e$ barrier of local algorithms.

Next, we verify that increasing $\alpha$ improves the performance of our algorithm as expected. In Figure \ref{fig:figurealpha}, we plot the variation of $\mathcal E$ of Algorithm \ref{alg:bp_alg} as a function of $\alpha.$ Our parameter setting is $\kappa = 0.01, b = 100,$ and $\lambda = 1/2$ with $n = 10^4.$ In the figure, we also plot the error rate $\mathcal E$ obtained by personalized PageRank under the same setting, with damping factor $\alpha_{pr} = 0.9$ \cite{Andersen2007}. The figure demonstrates that BP benefits more as the amount of side-information is increased than PageRank does.

Next, we compare the performance of BP algorithm without side-information given in \cite{montanari2015finding} to our algorithm with varying amounts of side-information. We choose the setting where $n = 10^4, b = 140$ and $\kappa = 0.033$ for different values of $\lambda$ by varying $p.$ In Figure \ref{fig:figure2} we plot the metric $\mathcal E$ against $\lambda$ for different values of $\beta,$ with $\alpha = 0.1$. For $\beta = 1$ we use Algorithm \ref{alg:bp_alg}. We can see that even BP with noisy side-information performs better than standard BP with no side-information. In addition, as expected increasing $\beta$ improves the error performance.
\begin{figure}
\centering
\includegraphics[scale=0.5]{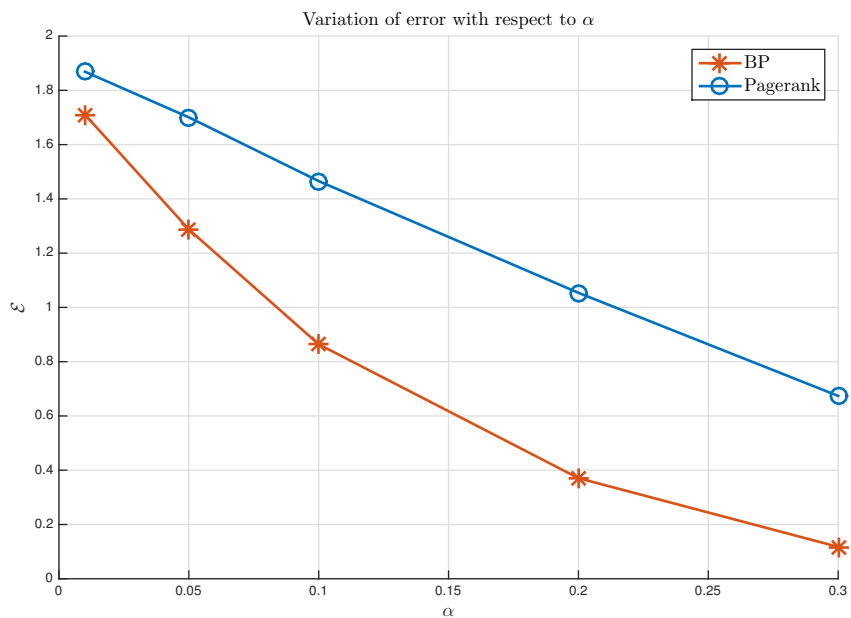}
\caption{Performance of BP Algo \ref{alg:bp_alg} as a function of $\alpha$}
\label{fig:figurealpha}
\end{figure}
\begin{figure}
\centering
\includegraphics[scale=0.5]{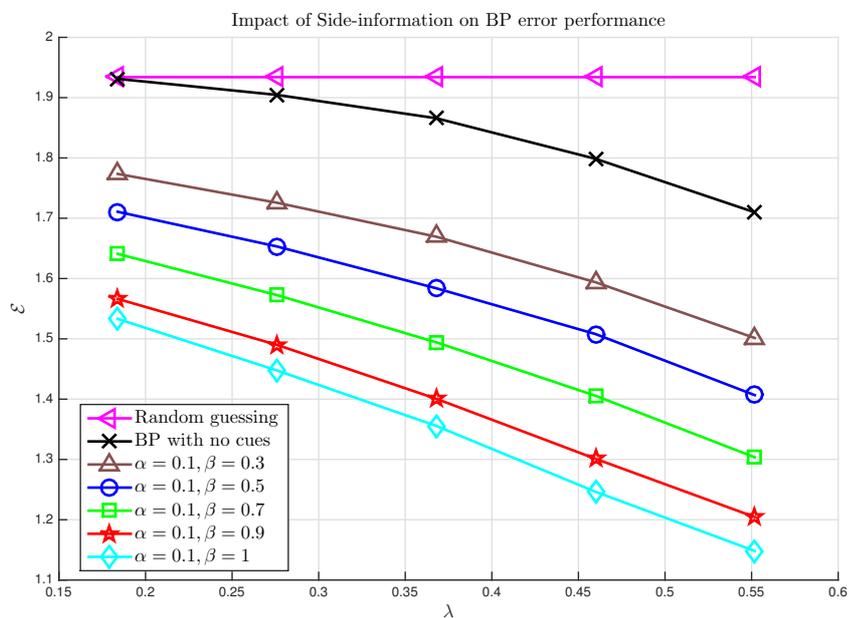}
\caption{Comparison of BP for subgraph detection for different amounts of side-information}
\label{fig:figure2}
\end{figure}
\subsection{Real-world datasets}
We consider two real-world networks: The USPS dataset and the Reuters-911 dataset. For these two datasets we compare the performance of BP with personalized PageRank in terms of recall rate $\mathcal{R}$ defined as
\[
\mathcal{R} = \frac{|S\cap\hat{S}|}{|S|},
\]
where $S$ is the true community and $\hat{S}$ is its estimate. This is a commonly used metric for community detection applications \cite{yang2015defining}. We use $\alpha_{\textnormal{pr}} = 0.9$ as the damping factor of PageRank.
We describe the datasets and the results obtained by our algorithms below.
\subsubsection{USPS dataset}
The USPS dataset contains 9296 scanned images of size 16 $\times$ 16, which can represented by a feature vector of size $256\times 1$ with values from -1 to +1 \cite{Zhou2004}. First, we construct a graph from this dataset, where nodes represent scanned images, by adding a link between a node and its three nearest neighbours, where the distance is defined as the euclidean distance between the images represented as feature vectors. The resulting graph is undirected with a minimum degree of at least 3. This is an instance of the $k$ nearest neighbour graph, with $k=3.$ On this graph we run BP and PageRank separately for each of the 10 communities for $\alpha = 0.01$ and $\alpha = 0.05$ (Figure~\ref{fig:bpprankusps}). It can be seen from Figure \ref{fig:bpprankusps}, that the performance of BP is strictly worse than that of PageRank. This result points to the importance of having the correct initialization for the BP parameters. Indeed, in our underlying model for BP, we assumed that there is only one dense community in a sparse network, in which case, as demonstrated in Figure\ref{fig:figurealpha}, BP outperforms PageRank by a big margin. However in the USPS graph, there are ten dense communities, and therefore it deviates significantly from our underlying model.
\begin{figure}
\centering
\includegraphics[scale=0.5]{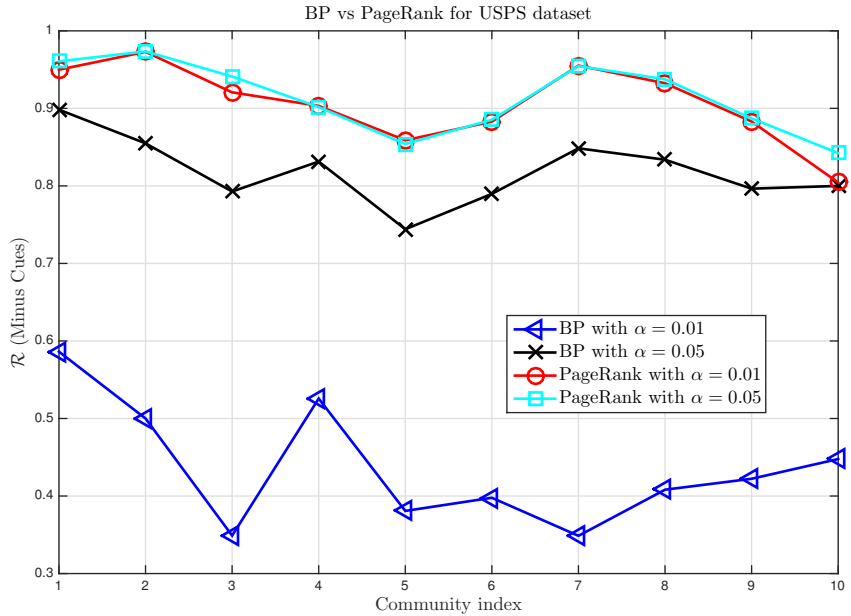}
\caption{Comparison of BP for subgraph detection for different amounts of side-information}
\label{fig:bpprankusps}
\end{figure}
\subsubsection{Reuters911 Dataset}
In this subsection we consider a graph that is closer to our assumed model. We consider the Reuters911 dataset also used in \cite{chen2012dense}. It is made up of words from all news released by Reuters for 66 days since September 11, 2001. Table~5 in \cite{chen2012dense} shows a group of 99 collocated words in this dataset. This subset represents the largest dense community to be detected in this dataset. A graph of size $n = 13332$ is generated from this dataset by adding a link between two words if they appear together in a sentence. The resulting graph is undirected and unweighted. We compare BP and Pagerank on this dataset for one and two cues. The cues we use are the words \textit{pentagon} and \textit{11}. In Table \ref{tabl2} we show the recall values $\mathcal R$ of PageRank and BP, excluding cues. Clearly, BP performs better.
\begin{table}
    \begin{tabular}{ | l | l | l| p{1.4cm} |}
    \hline
    Class $0$ & \begin{tabular}{l} $\# \textnormal{of cues} = 1$\end{tabular} & \begin{tabular}{l} $\# \textnormal{of cues} = 2$\end{tabular}  \\ \hline
    BP & \textbf{0.7143} & \textbf{0.7216} \\ \hline
    PageRank & 0.6327 &  0.6392 \\ \hline
    \end{tabular}
    \caption{Reuters911 recall results}
    \label{tabl2}
\end{table}

\section{Conclusions and Future Extensions}\label{sec:con}
In this work we developed a local distributed  BP algorithm that takes advantage of side-information to detect a dense subgraph embedded in a sparse graph. We obtained theoretical results based on density evolution on trees to show that it achieves zero asymptotic error  regardless of the SNR parameter $\lambda,$ unlike BP without cues, where there is a non-zero detectability threshold. We then validated our theoretical results by simulating our algorithm on a synthetic dataset and showing that, in the presence of both noise-less and noisy side-information, our BP algorithm overcomes the error bound of local algorithms when $\lambda < 1/e.$ We then applied our algorithm to two real-world datasets: USPS and Reuters911 and compared its performance with personalized PageRank. Our results indicate that the relative improvement in BP depends on the closeness of the dataset to the underlying graph model used to derive BP. In the future, we would like to do non-asymptotic analysis when $a, b$ and $\kappa$ are functions of $n.$ Extension to dense graphs would also be interesting, where traditional BP and tree coupling-based analysis will not work owing to the presence of loops.

\section*{Acknowledgements}
This work was partly funded by the French Government (National Research Agency, ANR) through the ``Investments for the Future'' Program reference \#ANR-11-LABX-0031-01 and Indo-French CEFIPRA Collaboration
Grant No.5100-IT1 ``Monte Carlo and Learning Schemes for Network Analytics.''

\appendix

\section{Description of G-W tree and derivation of Algorithm \ref{alg:bp_alg}}\label{ap:gwtree}
We derive Algorithm \ref{alg:bp_alg} by establishing a coupling formulation between a $t$-hop neighbourhood $G_u^t$ of node $u$ and a Galton-Watson (G-W) tree rooted at $u$ constructed as follows.
Let $T_u^t$ be a labelled Galton-Watson (G-W) tree of depth $t$ rooted at node $u$ constructed as follows (as in \cite{hajek2015recovering}): The label $\tau_u$ at node $u$ is chosen at random in the following way:
\begin{align*}
\mathbb{P} ( \tau_u = 1 ) = \frac{K}{n}, && \mathbb{P} ( \tau_u = 0) = \frac{n-K}{n}.
\end{align*}
The number of children $N_u$ of the root $u$ is Poisson-distributed with mean $d_1 = Kp + (n-K)q$ if $\tau_u = 1$ and mean $d_0 = n q$ if $\tau_u = 0.$ Each child is also assigned  a label. The number of children $i$ with label $\tau_i = 1$ is Poisson distributed with mean $Kp$ if $\tau_u = 1$ and mean $Kq$ if $\tau_i = 0.$ The number of children with label $\tau_i = 0$ is Poisson distributed with mean $(n-K)q$ for both $\tau_u = 0$ and $\tau_u = 1.$ By the independent splitting property of Poisson random variables, this is equivalent to assigning the label $\tau_i = 1$ to each child $i$ by sampling a Bernoulli random variable with probability (w.p.) $Kp/d_1$ if $\tau_u = 1$ and $Kq/d_0$ if $\tau_u = 0.$ Similarly $\tau_i = 0$ w.p. $(n-K)q/d_1$ and $(n-K)q/d_0$ for $\tau_u = 0$ and $1$ respectively. Namely, if $i$ is a child of $u,$
\begin{align}
\label{eq:condprobs}
\mathbb{P}(\tau_i = 1 |\tau_u = 1) = \frac{Kp}{d_1}, && \mathbb{P}(\tau_i = 1 | \tau_u = 0) = \frac{Kq}{d_0}.
\end{align}

We then assign the cue indicator function $\widetilde c$ such that $\widetilde c_i = 1$ w.p. $\alpha$ if $\tau_i = 1$ and $\widetilde c_i = 0$ if $\tau_i = 0.$ The process is repeated up to depth $t$ giving us $\widetilde C_u^t,$ the set of cued neighbours.
Now we have the following coupling result between $(G_u^t,\sigma^t,C_u^t),$ the neighbourhood of $u$ and the node labels of that neighbourhood and $(T_u^t,\tau^t,\widetilde C_u^t),$ the depth-$t$ tree $T_u^t$ and its labels due to \cite{hajek2015recovering}.
\begin{lemma}\cite[Lemma~15]{hajek2015recovering}
\label{l:coupling}
For $t$ such that $(np)^t = n^{o(1)},$ there exists a coupling such that $(G_u^t,\sigma^t,C_u^t) = (T_u^t,\tau^t,\widetilde C_u^t)$  with probability $1 - n^{-1 + o(1)}$.
\end{lemma}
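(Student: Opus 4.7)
The plan is to build the coupling by a simultaneous breadth-first exploration of the neighborhood $G_u^t$ and of the labelled Galton-Watson tree $T_u^t$, then bound the probability that they disagree at any stage.

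I would first couple the labels of the root and its immediate neighborhood. Given $\sigma_u$ in the graph, the number of neighbors of $u$ with $\sigma=1$ is $\textnormal{Bin}(K - \mathbf 1(\sigma_u=1), p)$ and with $\sigma=0$ is $\textnormal{Bin}(n-K-\mathbf 1(\sigma_u=0), q)$, while in the tree the corresponding counts are $\textnormal{Poi}(Kp)$ and $\textnormal{Poi}((n-K)q)$ by construction (matching (\ref{eq:condprobs})). Since $p=a/n$ and $q=b/n$ with $a,b=O(1)$, a standard Le Cam / maximal coupling gives total variation distance $O(Kp^2 + nq^2) = O(1/n)$ between the binomial and the Poisson counts. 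Conditional on the labels of the children, the cues $c_i$ (in the graph) and $\widetilde c_i$ (in the tree) have identical Bernoulli$(\alpha)$ distributions when the parent label is $1$ and are deterministically zero otherwise, so they can be coupled exactly. I would then repeat this argument at every explored node, replacing $K$ (resp.~$n-K$) by $K$ minus (resp.~$n-K$ minus) the number of subgraph (resp.~non-subgraph) nodes already revealed; the latter quantities are negligible compared to $K$ and $n-K$ as long as the total explored set has size $o(K) \wedge o(n-K)$.

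The second source of discrepancy is the presence of cycles: the graph exploration can re-encounter a previously-visited vertex, whereas the tree by definition cannot. I would track this by the standard BFS argument: let $Z_s$ denote the cumulative number of vertices discovered after $s$ queries; each new edge probed has conditional probability at most $Z_s \cdot \max(p,q) = O(Z_s/n)$ of closing a cycle. Under the hypothesis $(np)^t = n^{o(1)}$, the G-W process up to depth $t$ has total size at most $n^{o(1)}$ with probability $1 - n^{-\omega(1)}$ (a straightforward tail bound on iterated Poisson sums with mean $np = O(1)$), so the same bound transfers to the graph exploration via the coupling built so far. A union bound then gives that the total probability of any collision across all $n^{o(1)}$ queries is at most $n^{o(1)} \cdot n^{o(1)}/n = n^{-1+o(1)}$.

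Finally I would combine the two error sources. The label/cue coupling at each of the $n^{o(1)}$ explored nodes contributes a total variation of $O(1/n)$, and summing yields an overall label-coupling failure probability of $n^{-1+o(1)}$. Together with the collision bound, the union bound gives the stated $1 - n^{-1+o(1)}$ success probability for the joint coupling of $(G_u^t,\sigma^t,C_u^t)$ with $(T_u^t,\tau^t,\widetilde C_u^t)$.

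The main obstacle is the bookkeeping of the conditional distributions as the BFS proceeds: each new vertex revealed slightly biases the residual counts of subgraph/non-subgraph vertices available, and the labels of previously-explored vertices must be consistent with the uniform random choice of $S \subset V$ of size $K$. Handling this cleanly requires revealing the labels \emph{on the fly} (Polya-urn style), which is exactly the reason the G-W tree uses the \emph{conditional} probabilities in (\ref{eq:condprobs}) rather than the marginal $K/n$; once this alignment is made, all the error terms become of the same negligible order $n^{-1+o(1)}$.
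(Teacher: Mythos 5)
The paper does not actually prove this lemma: it is imported verbatim from \cite[Lemma~15]{hajek2015recovering}, with the only new ingredient being the cue labels $C_u^t$ versus $\widetilde C_u^t$. Your exploration-process argument is the standard proof of exactly this kind of statement and matches the strategy of the cited reference, so in substance you have reconstructed the right argument rather than found a different one. The one piece that is genuinely specific to this paper --- coupling the cue indicators --- you handle correctly: conditional on the revealed labels, $c_i$ and $\widetilde c_i$ are identically distributed Bernoulli$(\alpha)$ (resp.\ deterministically $0$), so they couple exactly and contribute no additional error. Two small points of care. First, your description of the offspring law covers only the case $\sigma_u=1$: when $\sigma_u=0$ the number of subgraph neighbours is $\textnormal{Bin}(K,q)$, not $\textnormal{Bin}(K,p)$, matching the tree's $\textnormal{Poi}(Kq)$ in (\ref{eq:condprobs}); the Le Cam bound $O(Kq^2)=O(1/n)$ is unaffected. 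Second, the claim that the G-W tree has total size $n^{o(1)}$ with probability $1-n^{-\omega(1)}$ needs the exponential tail of the martingale limit for Poisson offspring (Markov's inequality alone only gives a polynomial failure rate, which would weaken the final bound); this is available but worth stating, since the target failure probability is exactly $n^{-1+o(1)}$ and a looser tail here would dominate it.
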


We now derive the recursions for the likelihood ratios on the tree $T_u^t.$ For large $n$ with high probability, by the coupling formulation, $R_u^t$ also satisfy the same recursions. For notational simplicity, from here onwards we represent the cue labels on the tree by $c$ and the set of cued neighbours by $C_u^t,$ just as for the original graph. We use $\Lambda_u^t$ to denote the likelihood ratio of node $u$ computed on a tree defined as below:
\[
\Lambda_u^{t+1} = \log \left(\frac{\mathbb P (T_u^{t+1},C_u^{t+1}|\tau_u = 1)}{\mathbb P (T_u^{t+1},C_u^{t+1}|\tau_u = 0) }\right).
\]
By virtue of tree construction, if the node $u$ has $N_u$ children, the $N_u$ subtrees rooted on these children are jointly independent given $\tau_u.$ We use this fact to split $\Lambda_u^{t+1}$ in two parts.

\begin{align}
\Lambda_u^{t+1} &= \log \left(\frac{\mathbb P (T_u^{t+1},C_u^{t+1}|\tau_u = 1)}{\mathbb P (T_u^{t+1},C_u^{t+1}|\tau_u = 0) }\right) \nonumber \\
&= \log\left({\frac{\mathbb{P}(N_u|\tau_u =1 )}{\mathbb{P} (N_u |\tau_u = 0)}}\right)  +\\& \sum_{i \in \delta u} \log \left(\frac{\mathbb{P} (T_i^{t},c_i,C_i^t|\tau_u = 1)}{\mathbb{P} (T_i^{t},c_i,C_i^t|\tau_u = 0) }\right), \label{eq:lambdat}
\end{align}
by the independence property of subtress $T_i^t$ rooted on $i \in \delta u$. Since by Lemma \ref{l:coupling}, the degrees are Poisson,
\[
\mathbb{P}(N_u|\tau_u =1) = d_1^{N_u} e^{-d_1}/N_u!,
\]
and similarly for $\mathbb{P}(N_u|\tau_u =0).$ Therefore we have
\begin{align}
\log\left(\frac{\mathbb{P}(N_u|\tau_u =1)}{\mathbb{P} (N_u |\tau_u = 0)} \right) &= N_u \log\left(\frac{d_1}{d_0}\right) - (d_1 - d_0) \nonumber \\
&= N_u \log\left( \frac{d_1}{d_0} \right) -K(p-q).\label{eq:term1}
\end{align}
Next we look at the second term in (\ref{eq:lambdat}). We analyze separately the case of $c_i = 1$ and $c_i = 0$ for $i \in \delta_u,$ i.e, the cued and uncued children are handled separately.

\textit{Case 1} ( $c_i = 1$): We have

\begin{eqnarray}
\lefteqn{\log\left(\dfrac{\mathbb{P} (T_i^{t},c_i,C_i^t|\tau_u = 1)}{\mathbb{P} (T_i^{t},c_i,C_i^t|\tau_u = 0) }\right)}\nonumber \\
&\myeq{(a)}& \log\left( \dfrac{\mathbb{P} (T_i^{t},c_i,C_i^t,\tau_i = 1|\tau_u = 1)}{\mathbb{P} (T_i^{t},c_i,C_i^t,\tau_i = 1|\tau_u = 0)} \right)\nonumber \\
&=& \log\left( \dfrac{\mathbb{P} (T_i^{t},c_i,C_i^t|\tau_i = 1) \mathbb{P}(\tau_i = 1|\tau_u = 1)}{\mathbb{P} (T_i^{t},c_i,C_i^t|\tau_i = 1) \mathbb{P}(\tau_i = 1|\tau_u = 0)} \right)\nonumber \\
&\myeq{(b)}& \log \left(\dfrac{Kp/d_1}{Kq/d_0}\right), \label{eq:term2}
\end{eqnarray}
where in step (a) we applied the fact that $c_i=1$ implies $\tau_i = 1,$ and in (b) we used (\ref{eq:condprobs}).

\textit{Case 2} ($c_i = 0$): Observe that $\mathbb{P}(c_i = 0|\tau_i = 1) = 1 - \alpha$ and $\mathbb{P}(c_i = 0|\tau_i = 0) = 1.$ Note that

\begin{eqnarray}
\lefteqn{\mathbb{P} (T_i^{t},c_i,C_i^t|\tau_u = 1)}\nonumber \\
&&= \quad \mathbb{P} (T_i^{t},C_i^t|\tau_i = 1) \mathbb{P}(c_i|\tau_i = 1) \mathbb{P}(\tau_i= 1|\tau_u = 1)  \nonumber\\
&&\quad + \quad \mathbb{P} (T_i^{t},C_i^t|\tau_i = 0) \mathbb{P}(c_i|\tau_i = 0) \mathbb{P}(\tau_i= 0|\tau_u = 1) \nonumber\\
&&= \quad \mathbb{P} (T_i^{t},C_i^t|\tau_i = 1) (1 - \alpha) \frac{Kp}{d_1}  \nonumber\\
&& \quad+ \quad \mathbb{P} (T_i^{t},C_i^t|\tau_i = 0) \frac{(n-K)q}{d_1}.\label{eq:T5}
\end{eqnarray}

Similarly, we can show
\begin{align}
\mathbb{P} (T_i^{t},c_i,C_i^t|\tau_u = 0) &= \mathbb{P} (T_i^{t},C_i^t|\tau_i = 1) \frac{Kq}{d_0} (1 - \alpha) \nonumber \\
 &\quad+ \quad  \mathbb{P} (T_i^{t},C_i^t|\tau_i = 0) \frac{(n-K)q}{d_0}\label{eq:T4}.
\end{align}
Let us define
\[\Lambda_{i \to u}^t := \log \left(\frac{\mathbb{P} (T_i^{t},C_i^t|\tau_i = 1)}{\mathbb{P} (T_i^{t},C_i^t|\tau_i = 0)}\right),\]
the message that $i$ sends to $u$ at step $t$.
Using the above definition, (\ref{eq:T5}), and (\ref{eq:T4}) we get
\begin{eqnarray}
\lefteqn{\log \left(\frac{\mathbb{P} (T_i^{t},c_i,C_i^t|\tau_u = 1)}{\mathbb{P} (T_i^{t},c_i,C_i^t|\tau_u = 0) }\right)} \nonum\\
&=&\log \left(\frac{e^{\Lambda_{i\to u}^t} \frac{Kp}{d_1}(1-\alpha)+\frac{(n-K)q}{d_1}}{e^{\Lambda_{i\to u}^t}\frac{Kq}{d_0}(1-\alpha)+\frac{(n-K)q}{d_0}}\right)\nonum \\
&=& \log\left(\frac{d_0}{d_1}\right) + \log \left(\frac{e^{\Lambda_{i\to u}^t} \frac{Kp}{(n-K)q}(1-\alpha)+1}{e^{\Lambda_{i\to u}^t}\frac{K}{(n-K)}(1-\alpha)+1}\right).\label{eq:term3}
\end{eqnarray}
We then use the substitution $\nu := \log((n-K)/K)$ in the above equation.
 Finally combining (\ref{eq:term1}), (\ref{eq:term2}) and (\ref{eq:term3}) and replacing $\Lambda_u^t$ with $R_u^t$ and $\Lambda_{i \to u}^t$ with $R_{i \to u}^t$, we arrive at (\ref{eq:bpbelief}). The recursive equation (\ref{eq:bpupdate}) can be derived in exactly the same way by looking at the children of $i \in \delta u.$

\section{Proof of Proposition \ref{prop:largedegreeasyms}}\label{ap:pr1}
Since the statistical properties of $R_u^t$ and $\Lambda_u^t$ are the same in the $n \to \infty$ limit, we analyze the distribution of $\Lambda_u^t.$ Let us define the posterior likelihood for $\tau_u$ given by
\[
\widetilde{\Lambda}_i^t = \log\left(\frac{\mathbb{P}(\tau_i =1|T_i^t,C_i^t,c_i=0)}{\mathbb{P}(\tau_i =0|T_i^t,C_i^t,c_i=0)}\right).
\]
Note that $\mathbb P (\tau_i = 1 | c_i = 0) = \kappa(1-\alpha)/(1-\kappa\alpha)$ and $\mathbb P (\tau_i = 0|c_i = 0) = (1-\kappa)/(1-\kappa\alpha)$ are the prior probabilities of the uncued vertices. For convenience we use an overline for the symbols of expectation $\mathbbop E$ and probability $\mathbbop P$ to denote conditioning w.r.t $\{c_i=0\}.$

By a slight abuse of notation, let $\xi_0^t$ and $\xi_1^t$ denote the rvs whose distributions are the same as the distributions of $\widetilde{\Lambda}_i^t$ given $\{c_i=0,\tau_i =0\}$ and $\{c_i=0,\tau_i = 1\}$ respectively in the limit $n \to \infty.$ We need a relationship between $P_0$ and $P_1,$  the probability measures of $\xi_0^t$ and $\xi_1^t$ respectively, stated in the following lemma.
\begin{lemma}\label{l:changmea}
\[
\frac{dP_0}{dP_1}(\xi) = \frac{\kappa (1-\alpha)}{1-\kappa} \exp(-\xi).
\]
In other words for any integrable function $g(\cdot)$
\[
\mathbbop{E}[g(\widetilde{\Lambda}^t_u) |\tau_u = 0] = \frac{\kappa (1-\alpha)}{1-\kappa} \mathbbop{E}[g(\widetilde{\Lambda}^t_u) e^{-\widetilde{\Lambda}^t_u}|\tau_u = 1].
\]
\end{lemma}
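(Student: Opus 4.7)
\textbf{Proof plan for Lemma \ref{l:changmea}.}
The plan is to interpret the identity as a standard change-of-measure between the two conditional laws of the Galton-Watson tree observation $(T_u^t, C_u^t)$ given $\{c_u=0, \tau_u=0\}$ and given $\{c_u=0, \tau_u=1\}$, using the fact that the log-likelihood ratio itself serves (up to the prior correction) as the Radon-Nikodym derivative. In the limit $n\to\infty$ the coupling of Lemma \ref{l:coupling} lets us work exclusively on the tree, so the only randomness to track is that of $(T_u^t, C_u^t)$.

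First I would write out Bayes' rule for the posterior log-likelihood ratio $\widetilde{\Lambda}_u^t$:
\[
\widetilde{\Lambda}_u^t \;=\; \log\frac{\mathbb{P}(T_u^t,C_u^t\mid \tau_u=1, c_u=0)}{\mathbb{P}(T_u^t,C_u^t\mid \tau_u=0, c_u=0)} \;+\; \log\frac{\mathbb{P}(\tau_u=1\mid c_u=0)}{\mathbb{P}(\tau_u=0\mid c_u=0)}.
\]
Using $\mathbb{P}(\tau_u=1\mid c_u=0) = \kappa(1-\alpha)/(1-\kappa\alpha)$ and $\mathbb{P}(\tau_u=0\mid c_u=0) = (1-\kappa)/(1-\kappa\alpha)$, the prior ratio equals $\kappa(1-\alpha)/(1-\kappa)$, and so
\[
\frac{\mathbb{P}(T_u^t,C_u^t\mid \tau_u=1, c_u=0)}{\mathbb{P}(T_u^t,C_u^t\mid \tau_u=0, c_u=0)} \;=\; \frac{1-\kappa}{\kappa(1-\alpha)}\, e^{\widetilde{\Lambda}_u^t}.
\]
This identifies the Radon-Nikodym derivative of the observation law under $\tau_u=1$ with respect to the observation law under $\tau_u=0$ (both conditioned on $c_u=0$).

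Next, for any integrable $g$, I would perform the usual change of measure: writing expectations as integrals over the space of labelled trees and inserting the above density,
\[
\mathbbop{E}\bigl[g(\widetilde{\Lambda}_u^t)\mid \tau_u=1\bigr]
\;=\; \frac{1-\kappa}{\kappa(1-\alpha)}\,\mathbbop{E}\bigl[g(\widetilde{\Lambda}_u^t)\,e^{\widetilde{\Lambda}_u^t}\mid \tau_u=0\bigr].
\]
Replacing $g(\xi)$ by $g(\xi)e^{-\xi}$ and inverting gives exactly
\[
\mathbbop{E}\bigl[g(\widetilde{\Lambda}_u^t)\mid \tau_u=0\bigr] \;=\; \frac{\kappa(1-\alpha)}{1-\kappa}\,\mathbbop{E}\bigl[g(\widetilde{\Lambda}_u^t)\,e^{-\widetilde{\Lambda}_u^t}\mid \tau_u=1\bigr],
\]
which is the claimed Radon-Nikodym identity $dP_0/dP_1(\xi) = \frac{\kappa(1-\alpha)}{1-\kappa}e^{-\xi}$ after passing $n\to\infty$.

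I do not expect any real obstacle here: the only subtlety is bookkeeping the conditioning on $\{c_u=0\}$ so that the correct ``effective prior'' $\kappa(1-\alpha)/(1-\kappa)$ appears rather than the raw prior $\kappa/(1-\kappa)$. Once that is handled, the argument is just the classical observation that the likelihood ratio is the Radon-Nikodym derivative between the two conditional laws, and the posterior log-likelihood ratio differs from the (pre-prior) log-likelihood ratio by an additive constant that produces precisely the factor $\kappa(1-\alpha)/(1-\kappa)$.
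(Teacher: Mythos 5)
Your proposal is correct and takes essentially the same route as the paper: both proofs hinge on the fact that the likelihood ratio of the observation $(T_u^t,C_u^t)$ under the two hypotheses (conditioned throughout on $\{c_u=0\}$) equals $\frac{1-\kappa}{\kappa(1-\alpha)}e^{\widetilde\Lambda_u^t}$, and then invoke the standard change-of-measure identity to transfer expectations between the two conditional laws. The paper arrives at the same conclusion by first reducing to indicator functions $g=\mathbf 1(\widetilde\Lambda_u^t\in A)$, conditioning on $Y=(T_u^t,C_u^t)$, and rewriting $\mathbbop P(\tau_u=0|Y)$ as $e^{-\widetilde\Lambda_u^t}\mathbbop P(\tau_u=1|Y)$, which is just your Bayes-rule identity in a slightly different packaging.
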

\begin{proof} Following the logic in \cite{montanari2015finding}, we show this result for $g(\widetilde\Lambda^t_u) = \mathbf{1}(\widetilde\Lambda_u \in A),A$ being some measurable set. The result for general $g$ then follows because any integrable function can be obtained as the limit of a sequence of such rvs \cite{Bilsley}. Let $Y = (T_u^t,C_u^t),$ the observed \gls{rv}. Therefore
\begin{eqnarray*}
\lefteqn{\mathbbop{E} \left(\mathbf 1\left(\widetilde\Lambda^t_u \in A\right) |\tau_u = 0\right)}\\
 &=& \mathbbop{P}\left(\widetilde\Lambda^t_u \in A|\tau_u = 0\right) \\
&=& \frac{\mathbbop P (\widetilde\Lambda^t_u \in A, \tau_u = 0)}{\mathbbop{P}(\tau_u=0)} \\
&=& \frac{\mathbbop{E}_Y \left(\mathbbop{P}(\widetilde\Lambda^t_u \in A, \tau_u = 0| Y)\right)}{\mathbbop{P}(\tau_u=0)}\\
&=& \mathbbop{E}_Y\left[\frac{\mathbf{1}(\widetilde\Lambda^t_u \in A) \mathbbop{P}({\tau_u= 0|Y})}{\mathbbop{P}(\tau_u = 0)}\right]\\
&\myeq{(a)}&  \mathbbop{E}_Y\left(\frac{\mathbf{1}(\widetilde\Lambda^t_u \in A)e^{-{\widetilde\Lambda^t_u}} \mathbbop{P}({\tau_u = 1|Y})}{\mathbbop{P}(\tau_u = 0)}\right)\\
&=& \frac{\mathbbop P (\tau_u = 1)}{\mathbbop P (\tau_u = 0)} \mathbbop{E}_1(\mathbf{1}(\widetilde\Lambda^t_u \in A) e^{-{\widetilde\Lambda^t_u}})\\
&=& \frac{\kappa (1-\alpha)}{1 -\kappa}\mathbbop{E}_1(\mathbf{1}(\widetilde\Lambda^t_u \in A) e^{-{\widetilde\Lambda^t_u}}),
\end{eqnarray*}
where in (a) we used the fact that $\frac{\mathbbop P (\tau_u = 0|Y)}{\mathbbop P (\tau_u = 1|Y)} = \exp(-\widetilde\Lambda^t_u),$ and $\mathbb E_1$ denotes expectation conditioned on the event $\{\tau_u = 1\}.$
\end{proof}

\begin{proof}

Since $\lal$ and $\kappa$ are fixed and $b \to \infty,$ from (\ref{eq:effsnr}) we have
\begin{equation}
\label{eq:rho}
\rho := a/b = 1 + \sqrt{\frac{\lal(1-\kappa)}{(1-\alpha)^2\kappa^2 b}} = 1 + O(b^{-1/2}).
\end{equation}

Following \cite{montanari2015finding}, we prove the result by induction on $t$. First let us verify the result holds when $t=0,$ for the initial condition that $\xi_0^0 = \xi_1^0 = -\upsilon.$ We only do this for $\xi_0^{t}$ since the steps are similar for $\xi_1^t.$ Observe that

\begin{align}
f(-\upsilon)  &= \log\left( \frac{\frac{\kappa(1-\alpha)\rho}{(1-\kappa)} +1 }{ \frac{\kappa(1-\alpha)}{(1-\kappa)} +1 }\right)\nonumber \\
&= \log\left(  1 + (\rho-1) \frac{\kappa (1-\alpha)}{1-\kappa\alpha} \right)\nonumber \\
&\myeq{(a)}(\rho-1) \frac{\kappa (1-\alpha)}{1-\kappa\alpha} - \frac{(\rho-1)^2}{2} \frac{\kappa^2 (1-\alpha)^2}{(1-\kappa\alpha)^2} \nonum \\&+ O(b^{-3/2}),\label{eq:fups}
\end{align}
where $(a)$ follows from (\ref{eq:rho}), and Taylor's expansion around $\rho = 1.$ Similarly,
\begin{equation}\label{eq:fups2}
f^2(-\upsilon) = (\rho-1)^2\frac{\kappa^2(1-\alpha)^2}{(1 - \kappa\alpha)^2} + O(b^{-3/2}),
\end{equation}
\begin{align}
\log(\rho) &= \log(1 + (\rho-1)) \nonumber \\ &= \sqrt{\frac{\lal(1-\kappa)}{(1-\alpha)^2\kappa^2b}} - \frac{\lal(1-\kappa)}{2(1-\alpha)^2\kappa^2 b} + O(b^{-3/2})\label{eq:logrho},
\end{align}
and
\begin{equation}\label{eq:logrho2}
\log^2(\rho) = \frac{\lal(1-\kappa)}{(1-\alpha)^2\kappa^2 b} + O(b^{-3/2}).
\end{equation}
Let us verify the induction result for $t=0.$ Using the recursion (\ref{eq:d1}) with $\xi_0^{0} = \log\frac{\kappa(1-\alpha)}{1-\kappa} = -\upsilon,$ we can express $\mathbb{E}\xi_0^{1}$ as
\begin{equation*}
\mathbb{E}\xi_0^{1} =
-\kappa b (\rho-1) - \upsilon + \kappa b \alpha \log(\rho)
 + b(1-\kappa\alpha) f(-\upsilon).
\end{equation*}
Now using (\ref{eq:fups}) and (\ref{eq:logrho}) we obtain
\begin{align}
\mathbb{E}\xi_0^{1} &= -\kappa\sqrt{\frac{\lal b (1-\kappa)}{(1-\alpha)^2\kappa^2}} - \upsilon + \kappa \alpha \sqrt{\frac{\lal(1-\kappa)b}{(1-\alpha)^2\kappa^2}} \\ &-\frac{\lal(1-\kappa)\alpha}{2(1-\alpha)^2\kappa} \nonum \\
& + \sqrt{\frac{\lal(1-\kappa)b}{(1-\alpha)^2\kappa^2}}\kappa(1-\alpha) - \frac{\lal(1-\kappa)}{2(1-\kappa\alpha)}+ O(b^{-1/2}) \nonum \\
&= -\upsilon - \frac{\lal(1-\kappa)}{2(1-\alpha)^2\kappa} \alpha - \frac{\lal(1-\kappa)}{2(1-\kappa\alpha)}+O(b^{-1/2})\label{eq:mu01}.
\end{align}
We also obtain, using the formula for the variance of a Poisson random variable
\begin{align}
\text{Var}{\xi_0^1}  &= \log^2(\rho)\kappa b \alpha + f^2(-\upsilon)(1-\kappa)b + f^2(-\upsilon)\kappa b(1-\alpha)\nonum\\
&\myeq{(a)} \frac{\lal \alpha (1-\kappa)}{(1-\alpha)^2\kappa} + \frac{(1-\kappa)\lal}{1-\kappa\alpha} + O(b^{-1/2}),\label{eq:var01}
\end{align}
where in (a) we used (\ref{eq:logrho2}) and (\ref{eq:fups2}). Comparing (\ref{eq:mu01}) and (\ref{eq:var01}), after letting $b \to \infty$ with $\mu^{(1)}$ in (\ref{eqrecu}) using $\mu^{(0)} = 0,$ we can verify the mean and variance recursions. Next we use Lemma \ref{l:berryessen} to prove gaussianity. Note that we can express $\xi_0^1 - h$ as the Poisson sum of iid mixture random variables as follows
\[
\xi_0^1 - h = \sum_{i = 1}^{L_0} X_i,
\]
where $L_0 \sim \text{Poi}(b),$ and $\mathcal L(X_i) = \kappa \alpha \mathcal L(\log(\rho)) + (1-\kappa) \mathcal L(f(-\upsilon)) + (\kappa (1-\alpha))\mathcal L(f(-\upsilon)),$ keeping in mind the independent splitting property of Poissons, where $\mathcal L$ denotes the law of a \gls{rv}\footnote{Clearly $X_i$ are iid with mean $\mu = \kappa \alpha \log(\rho)+ (1-\kappa\alpha)f(-\nu) = \Omega(1/\sqrt{b})$ and $\sigma^2 = \Omega(1/b),$ both of which are bounded (fixed $b$ and as $n \to \infty$). Also $\mu^2 + \sigma^2 = \Omega(1/b).$}. Next we calculate $\mathbb E (|X_i|^3).$ It is easy to show using (\ref{eq:fups}) and (\ref{eq:logrho}) that
\begin{dmath}
\mathbb E (|X_i|^3) = \kappa \alpha \log^3(b) + (1-\kappa\alpha)|f^3(-\upsilon)|
= O(b^{-3/2}).
\end{dmath}
Therefore the upper bound of Lemma \ref{l:berryessen} with $\lambda = b$ becomes
\begin{align*}
\frac{C_{BE}\mathbb E(|X_i|^3)}{\sqrt{\gamma(\mu^2 + \sigma^2)^3}} &= \frac{O(b^{-3/2})}{\sqrt{b\Omega(b^{-3})}}
= O(b^{-1/2}).
\end{align*}
By Lemma \ref{l:berryessen}, taking $b \to \infty$ we obtain the convergence to Gaussian.

Having shown the induction hypothesis for $t=0,$ we now assume it holds for some $t>0.$  By using (\ref{eq:fx}), (\ref{eq:expfx}) and Lebesgue's dominated convergence theorem \cite[Theorem~16.4]{Bilsley} we obtain
\begin{dmath}
\label{eq:Efx1}
\mathbb{E} f(\xi_1^t) = (\rho-1) \mathbb{E} \left(\frac{e^{\xi_1^t}}{1+e^{\xi_1^t}}\right) - \frac{(\rho-1)^2}{2} \mathbb{E}\left(\frac{e^{2\xi_1^t}}{(1+e^{\xi_1^t})^2}\right) + O(b^{-3/2}),
\end{dmath}
and by using Lemma \ref{l:changmea} in addition we obtain
\begin{dmath}
\label{eq:Efxo}
\mathbb{E} f(\xi_0^t) = (\rho-1)\frac{\kappa(1-\alpha)}{1-\kappa} \mathbb{E} \left(\frac{1}{1+e^{\xi_1^t}}\right) -  \frac{(\rho-1)^2\kappa (1-\alpha)}{2(1-\kappa)} \mathbb{E}\left(\frac{e^{\xi_1^t}}{(1+e^{\xi_1^t})^2}\right)+ O(b^{-3/2}).
\end{dmath}
Now we take the expectation of both sides of (\ref{eq:d1}) and (\ref{eq:d2}). Using the fact that $\mathbb{E} \sum_{i=1}^L X_i = \mathbb{E}X_i \mathbb{E}L$ if $L\sim \text{Poi}$ and $X_i$ are independent and identically distributed (iid) \gls{rv}, we obtain
\begin{dmath}
\label{eq:Exo}
\mathbb{E}(\xi_0^{t+1}) = h + \log\left(\frac{p}{q}\right) \kappa b \alpha + \mathbb{E} \left(f(\xi_0^t)\right) (1-\kappa)b + \mathbb{E}\left(f(\xi_1^{t})\right) \kappa b(1-\alpha)
\end{dmath}
and
\begin{dmath}
\label{eq:Ex1}
\mathbb{E}(\xi_1^{t+1}) = h + \log\left(\frac{p}{q}\right) \kappa a \alpha + \mathbb{E} \left(f(\xi_0^t)\right) (1-\kappa)b + \mathbb{E}\left(f(\xi_1^{t})\right) \kappa a(1-\alpha).
\end{dmath}
We now substitute (\ref{eq:Efxo}) and (\ref{eq:Efx1}) in (\ref{eq:Exo}) to get:
\begin{eqnarray*}
\lefteqn{\mathbb{E}(\xi_0^{t+1})}\\
&=& h + \kappa b \alpha \log(\rho)\\
&+& (1-\kappa)b\bigg[ (\rho-1)\frac{\kappa(1-\alpha)}{1-\kappa} \mathbb{E} \left(\frac{1}{1+e^{\xi_1^t}}\right)\\  &-&\frac{(\rho-1)^2\kappa (1-\alpha)}{2(1-\kappa)} \mathbb{E}\left(\frac{e^{\xi_1^t}}{(1+e^{\xi_1^t})^2}\right)+ O(b^{-3/2}) \bigg] \\
&+&  \kappa b (1-\alpha)\bigg[ (\rho-1) \mathbb{E} \left(\frac{e^{\xi_1^t}}{1+e^{\xi_1^t}}\right) \\
&-& \frac{(\rho-1)^2}{2} \mathbb{E}\left(\frac{e^{2\xi_1^t}}{(1+e^{\xi_1^t})^2}\right) + O(b^{-3/2}) \bigg],
\end{eqnarray*}
which on simplifying and grouping like terms gives
\begin{dmath*}
\mathbb{E}(\xi_0^{t+1}) = h + \kappa b \alpha \log(\rho)+ \kappa (a-b) (1-\alpha) - \frac{\lal(1-\kappa)}{2(1-\alpha)\kappa} \mathbb E \left(\frac{e^{\xi_1^t}}{1 + e^{\xi_1^t}}\right) + O(b^{-1/2}).
\end{dmath*}
Substituting $h = -\kappa(a-b) - \log\left( \frac{1-\kappa}{\kappa(1-\alpha)}\right),$ we get
\begin{dmath*}
\mathbb{E}(\xi_0^{t+1}) = - \log\left( \frac{1-\kappa}{\kappa(1-\alpha)}\right) -\alpha\kappa(a-b) + \kappa b \alpha \log(\rho) - \frac{\lal(1-\kappa)}{2\kappa(1-\alpha)} \mathbb E \left(\frac{e^{\xi_1^t}}{1 + e^{\xi_1^t}}\right) + O(b^{-1/2}).
\end{dmath*}
Using (\ref{eq:logrho}) we get

\begin{eqnarray*}
\lefteqn{-\alpha \kappa(a-b)+ \kappa b \alpha\log(\rho)} \\
&=&  \kappa b \alpha (\log(\rho) - (\rho - 1))\\
&=& \kappa b \alpha  \left(-\frac{\lal(1-\kappa)}{2\kappa^2 b(1-\alpha)^2} + O(b^{-3/2})\right)\\
&=& -\frac{\lal \alpha (1-\kappa)}{2(1-\alpha)^2\kappa} + O(b^{-1/2}).
\end{eqnarray*}
Finally we obtain
\begin{dmath}
\label{eq:Ex0_1}
\mathbb{E}(\xi_0^{t+1}) = -\log\left(\frac{1-\kappa}{\kappa(1-\alpha)}\right) -\frac{\lal\alpha (1-\kappa)}{2(1-\alpha)^2\kappa} -
 \lal\frac{(1-\kappa)}{2(1-\alpha)\kappa} \mathbb{E}\left(\frac{e^{\xi_1^t}}{1+ e^{\xi_1^t}}\right) + O(b^{-1/2}).
\end{dmath}
Using exactly the same simplifications we can get

\begin{dmath}
\label{eq:Ex1_1}
\mathbb{E}(\xi_1^{t+1}) = -\log\left(\frac{1-\kappa}{\kappa(1-\alpha)}\right) +\frac{\alpha \lal (1-\kappa)}{2\kappa(1-\alpha)^2} +
 \frac{\lal(1-\kappa)}{2\kappa(1-\alpha)} \mathbb{E}\left(\frac{e^{\xi_1^t}}{1+ e^{\xi_1^t}}\right) + O(b^{-1/2}).
\end{dmath}
Our next goals are to compute $\textnormal{var}(\xi_0^{t+1})$ and $\textnormal{var}(\xi_1^{t+1}).$ Towards this,
observe that $f^2(x) = (\rho-1)^2 \left( \frac{e^x}{1 + e^x}\right)^2 + O(b^{-3/2}).$ Therefore
\[
\mathbb{E}(f^2(\xi_0^t)) = (\rho-1)^2 \mathbb{E}\left(\frac{e^{2\xi_0^t}}{(1+e^{\xi_0^t})^2}\right) + O(b^{-3/2}),
\]
and using Lemma \ref{l:changmea} the above becomes
\begin{equation}\label{eq:eqf20}
\mathbb{E}(f^2(\xi_0^t)) = (\rho-1)^2 \frac{\kappa (1-\alpha)}{1-\kappa} \mathbb{E} \left(\frac{e^{\xi_1^t}}{(1+e^{\xi_1^t})^2}\right) + O(b^{-3/2}).
\end{equation}
Similarly,
\begin{equation}\label{eq:eqf21}
\mathbb{E}\left(f^2(\xi_1^t)\right) = (\rho-1)^2  \mathbb{E} \left(\frac{e^{2\xi_1^t}}{(1+e^{\xi_1^t})^2}\right) + O(b^{-3/2}).
\end{equation}
Now we use the formula for the variance of Poisson sums $\text{Var}{ \sum_{i=1}^L X_i} = \mathbb{E}(X_i^2) \mathbb{E}(L)$ to get
\begin{dmath*}
\textnormal{Var}(\xi_0^{t+1}) = \log^2(\rho) \kappa b\alpha + (1-\kappa)b \mathbb E (f^2(\xi_0^t))+ \kappa b(1-\alpha) \mathbb E (f^2(\xi_1^t))
\end{dmath*}
\begin{dmath*}
\textnormal{Var}(\xi_1^{t+1}) = \log^2(\rho)\kappa a\alpha + (1-\kappa)b \mathbb E (f^2(\xi_0^t)) +
\kappa a (1-\alpha) \mathbb E (f^2(\xi_1^t)).
\end{dmath*}
Substituting (\ref{eq:eqf20}) and (\ref{eq:eqf21}) into the above equations and letting $b \to \infty,$ we get
\[
\lim_{b\to\infty}\textnormal{Var}(\xi_1^{t+1}) = \lim_{b\to\infty} \textnormal{Var}(\xi_0^{t+1}) = \mu^{(t+1)},
\]
where
\begin{dmath}
\label{eq:varx1}
\mu^{(t+1)}= \frac{\lal \alpha(1-\kappa)}{\kappa(1-\alpha)^2} + \frac{\lal(1-\kappa)}{\kappa(1-\alpha)} \mathbb{E} \left(\frac{\exp{\xi_1^t}}{1 + \exp(\xi_1^{t})}\right).
\end{dmath}
Using $\mu^{(t+1)}$ of (\ref{eq:varx1}) in (\ref{eq:Ex0_1}) and (\ref{eq:Ex1_1}) we get
\begin{align}
\mathbb E (\xi_0^{t+1}) &= -\log \left( \frac{(1-\kappa)}{\kappa (1-\alpha)}\right) - \frac{1}{2}\mu^{(t+1)}+ O(b^{-1/2}) \nonumber \\
\mathbb E (\xi_1^{t+1}) &= -\log \left( \frac{(1-\kappa)}{\kappa (1-\alpha)}\right) + \frac{1}{2}\mu^{(t+1)}+ O(b^{-1/2}). \label{eq:expxi1}
\end{align}
Now we use the fact the induction assumption that $\xi_1^{t} \to \mathcal{N}(\mathbb E(\xi_1^t),\mu^{(t)}).$ Since the function $ e^{\xi_1^t}/(1 + e^{\xi_1^t})$ is bounded, by Lebesgue's dominated convergence theorem \cite[Theorem~16.4]{Bilsley} this means $\mathbb E (1/(1 + e^{-\xi_1^t})) \to \mathbb E (1/(1 + e^{-\mathcal{N}(\mathbb E(\xi_1^t),\mu^{(t)})}))$ as $b\to\infty.$We can write $\mathcal{N}(\mathbb E(\xi_1^t),\mu^{(t)}) = \sqrt{\mu^{(t)}} Z + \mathbb E(\xi_1^t),$ where $Z \sim \mathcal N (0,1).$ Therefore we obtain
\begin{align*}
\mathbb E \left(\frac{1}{1 + e^{-\xi_1^t}}\right) &= \mathbb E \left(\frac{1}{1 + e^{-\sqrt{\mu^{(t)}}Z}\frac{(1-\kappa)}{\kappa (1-\alpha)} e^{-\frac{\mu^{(t)}}{2}}}\right)\\
&= \mathbb E \left(\frac{\kappa (1-\alpha)}{\kappa (1-\alpha) + (1-\kappa)e^{(-\sqrt{\mu^{t}}Z - \frac{\mu^{(t)}}{2})}}\right).
\end{align*}
Substituting the above into (\ref{eq:varx1}) gives us the recursion for $\mu^{(t+1)}$ given in (\ref{eqrecu}).

Next we prove Gaussianity. Consider

\begin{eqnarray}
\label{eq:intst}
\lefteqn{\xi_0^{t+1} - \mathbb{E} (\xi_0^{t+1})}\nonum \\ &=& \log\left(\frac{p}{q}\right) (L_{0c} - \mathbb{E}(L_{0c})) + \sum_{i=1}^{L_{00}} (f(\xi_{0,i}^t) -\mathbb{E} (f(\xi_0^t))) + \nonum \\ && \sum_{i=1}^{L_{01}} (f(\xi_{1,i}^t) -\mathbb{E} (f(\xi_1^t)) ) +  (L_{00} - \mathbb{E}(L_{00})) \mathbb{E} (f(\xi_0^t)) + \nonum \\&& (L_{01} - \mathbb{E}(L_{01})) \mathbb{E} (f(\xi_1^t)).
\end{eqnarray}

Let us look at the second term. Let $X_i = f(\xi_{0,i}^t) -\mathbb{E} f(\xi_{0,i}^t).$ Then it can be shown that $\mathbb{E}X_i^2 = O(1/b).$ Let $D := \sum_{i=1}^{L_{00}} X_i - \sum_{i=1}^{\mathbb{E}L_{00}} X_i.$ In the second term the summation is taken up to  $i \le \mathbb{E}L_{00}.$ Then $\mathbb E (D^2) = |\sum_{i=1}^\delta X_i|^2,$ where $\delta \le  | L_{00} - \mathbb{E}L_{00}|+1,$ where the extra 1 is because $\mathbb{E}L_{00}$ may not be an integer. Therefore $\mathbb{E}D^2 = \mathbb{E}\delta \mathbb{E}|X_1|^2 \le (C/b) ((1-\kappa) b+1)^{1/2} = O(1/\sqrt{b}).$ Thus, we can replace the Poisson upper limits of the summations in the second and third terms of (\ref{eq:intst}) by their means, leading to
\begin{dmath}
\xi_0^{t+1} - \mathbb{E} (\xi_0^{t+1}) = \log\left(\frac{p}{q}\right) (L_{0c} - \mathbb{E}(L_{0c})) + \sum_{i=1}^{\mathbb{E}(L_{00})} (f(\xi_{0,i}^t) -\mathbb{E} (f(\xi_0^t)) ) + \sum_{i=1}^{\mathbb{E}(L_{01})} (f(\xi_{1,i}^t) -\mathbb{E} (f(\xi_1^t))) + (L_{00} - \mathbb{E}(L_{00})) \mathbb{E} f(\xi_0^t) + (L_{01} - \mathbb{E}(L_{01})) \mathbb{E} (f(\xi_1^t)) + o_p(1),
\end{dmath}
where $o_p(1)$ indicates a \gls{rv} that goes to zero in probability in the limit. The combined variance of all other terms approaches $\mu^{(t+1)},$ defined in (\ref{eqrecu}), as $b\to\infty$ and it is finite for a fixed $t.$ Now since we have an infinite sum of independent \gls{rv}s as $a,b \to \infty$, with zero mean and finite variance, from the standard CLT, we can conclude that the distribution tends to $\mathcal{N}(0,\mu^{t+1}).$ The argument for $\xi_1^{t+1}$ is identical.\end{proof}
\section{Finishing the proof of Theorem \ref{pr:weakrecovery}}\label{appr2}
\begin{proof}{
We bound $\mathbb E (|\overline{S} \Delta \widehat{S}_0|)/(K(1-\alpha))$ as follows:
\begin{align}
\lim_{b\to \infty}\lim_{n\to\infty}\frac{\mathbb{E} (|\overline{S}\Delta \widehat S_0|)}{K(1-\alpha)}  &= \lim_{b\to \infty}\lim_{n\to\infty} \left(\frac{\mathbb E \left(\sum_{i=1}^n \mathbf{1}_{\sigma_i \neq \widehat{\sigma}_i} \right)}{K-K\alpha}\right)  \nonumber \\
&\le \lim_{b\to \infty} \biggl(\frac{(1-\kappa)}{\kappa(1-\alpha)}\mathbb P(\xi_0^t \ge 0) + \nonum\\
& \quad \quad \mathbb P(\xi_1^t \le 0)\biggr)\label{eq:frerror},
\end{align}
since
\begin{eqnarray}
\lefteqn{\mathbb E \left(\sum_{i=1}^n \mathbf{1}_{\sigma_i \neq \widehat{\sigma}_i} \right) = }\nonum \\
&&n (\mathbb P (c_i = 0,\sigma_i = 0) \mathbb P(R_i^t > \upsilon|c_i = 0,\sigma_i = 0)+ \nonum \\
&&\mathbb P (c_i = 0,\sigma_i = 1) \mathbb P(R_i^t < \upsilon|c_i = 0,\sigma_i = 1)),\label{eq:Ri}
\end{eqnarray}
and since $R_i^t - R^t_{i \to u} = O(b^{-1/2}).$ Indeed, given the $b\to\infty$ limit in (\ref{eq:frerror}), the bound $O(b^{-1/2})$ allows us to replace $R_i^t$ in (\ref{eq:Ri}) by the distribution limit when $n \to \infty,$ which is $\xi_0^t$ or $\xi_1^t$ when conditioned on $\{\sigma_i = 0\}$ or $\{\sigma_i = 1\}$ respectively, for an arbitrary $i.$ We now analyze each term in (\ref{eq:frerror}) separately. By Proposition \ref{prop:largedegreeasyms} we have
\[
\lim_{b\to \infty}  \mathbb P(\xi_1^t \le 0) = Q\left(\frac{1}{\sqrt{\mu^{(t)}}}\left(\frac{\mu^{(t)}}{2} - \log\frac{(1-\kappa)}{\kappa(1-\alpha)}\right)\right)
\]
where $Q(\cdot)$ denotes the standard $Q$ function. Notice that by (\ref{eqrecu}) we have that $\mu^{(t)} \ge \lal \alpha (1-\kappa)/(\kappa(1-\alpha)^2),$ since $\mathbb{E} \left(\frac{1-\kappa}{\kappa(1-\alpha) + (1-\kappa) \exp(-\mu/2 - \sqrt {\mu} Z)}\right) \ge 0.$  In addition, by (\ref{eq:varx1}), $\mu^{(t)} \le \frac{\lal(1-\kappa)}{\kappa(1-\alpha)^2}.$ Note that the lower bound on $\mu^{(t)}$ is not useful when $\alpha = 0.$ Therefore by using the Chernoff bound for the $Q$ function, $Q(x) \le \frac{1}{2}e^{-x^2/2},$ we get
\begin{align}
\lim_{b\to\infty}\mathbb P(\xi_1^t \le 0) &\le \frac{1}{2}e^{-\frac{1}{2\mu^{(t)}} (\frac{\mu^{(t)}}{2} - \log\left(\frac{1-\kappa}{\kappa(1-\alpha)}\right))^2}\nonum \\
&= \frac{1}{2}e^{-\frac{\mu^{(t)}}{8} (1 - \frac{2}{\mu^{(t)}}\log\left(\frac{1-\kappa}{\kappa(1-\alpha)}\right))^2}\nonum \\
&\le \frac{1}{2}e^{-\frac{\mu^{(t)}}{8}} e^{\frac{1}{2} \log(\frac{1-\kappa}{\kappa(1-\alpha)})}\nonum \\
&= \frac{1}{2}\sqrt{\frac{1-\kappa}{\kappa(1-\alpha)}}e^{-\frac{\mu^{(t)}}{8}},\label{eq:p1}
\end{align}
where we used the fact that $(1-x)^2 \ge 1-2x$ for any $x>0.$
By employing similar reductions, we can show
\begin{align}\label{eq:p2}
\lim_{b\to\infty} \left(\frac{(1-\kappa)}{\kappa(1-\alpha)}\right)\mathbb P(\xi_0^t \ge 0) &\le \frac{1}{2}\sqrt{\frac{1-\kappa}{\kappa(1-\alpha)}}e^{-\frac{\mu^{(t)}}{8}}.
\end{align}

Substituting (\ref{eq:p1}) and (\ref{eq:p2}) back in (\ref{eq:frerror}) and using the fact that $\mu^{(t)} \ge \lal \alpha (1-\kappa)/(\kappa(1-\alpha)^2),$ we get
\[
\lim_{b\to\infty}\lim_{n\to\infty}\frac{\mathbb{E} (|\overline{S}\Delta \widehat S_0|)}{K(1-\alpha)} \le \sqrt{\frac{1-\kappa}{\kappa(1-\alpha)}}e^{-\frac{\lal\alpha(1-\kappa)}{8\kappa(1-\alpha)^2}}
\]
Then using (\ref{eq:boundexp}) we get the desired result in (\ref{eq:prweak}) .}\end{proof}


\bibliographystyle{IEEEtran}
\bibliography{myrefs.bib}
\newpage
\section*{Supplementary Material}
\begin{proof}[Proof of Proposition \ref{l:bpdistr2}]
We derive the conditional distributions of the messages $R_{u\to i}^t$ for a finite $t$ given $\{\sigma_u = 0\}$ and given $\{\sigma_u = 1\}.$ In this limit the tree coupling of Lemma \ref{l:coupling} holds with a slightly modified construction of the tree to accomodate the difference in the generation of cued nodes.
It is similar to the tree coupling in Lemma \ref{l:coupling}, with the only difference being the generation of cues. At any level of the tree, a node $u$ is labelled a cue such that $\mathbb P (c_u = 1|\tau_u = 1) = \alpha \beta$ and $\mathbb P (c_u = 1|\tau_u = 0) = \kappa\alpha (1-\beta)/(1-\kappa),$ so that the equalities in (\ref{eq:alpha}) and (\ref{def:beta}) hold, where $c_u$ denotes the cue membership of node $u$ on the tree.
 Let $F_{u \to i}^t$ be such that $R_{u \to i}^t = F_{u \to i}^t + h_u,$ for any two neighbouring nodes $i$ and $u.$ Then, it can be seen from (\ref{eq:bpupdate2}) that $F_{u \to i}^t$ satisfies the following recursion
\begin{equation}\label{fj}
F_{u \to i}^{t+1} = -\kappa(a-b) + \sum_{l\in\delta u, l\neq i} f_{\textnormal{isi}}(F_l^{t}+h_l),
\end{equation}
where $f_{\textnormal{isi}}(x) := \log\left(\frac{e^{(x -\nu)}\rho + 1}{e^{(x-\nu)}+1}\right).$ Let $\Psi^t_0,\Psi^t_1$ be the \gls{rv}s that have the conditional asymptotic distribution of $F_{u \to i}^t$ given $\{\sigma_u = 0\}$ and $\{\sigma_u = 1\}$ respectively in the limit $n \to \infty.$ Then, by studying the recursion (\ref{fj}) on the tree we can conclude that $\Psi^t_0,\Psi^t_1$ satisfy the following recursive distributional equations
\begin{align}
\Psi_0^{t+1} &\myeq{D} -\kappa(a-b) + \sum_{i = 0}^{L_{01c}} f_{\textnormal{isi}}(\Psi_{1i}^t + B_c) +\nonum \\ &\sum_{i = 0}^{L_{01n}}f_{\textnormal{isi}}(\Psi_{1i}^t + B_n) + \sum_{i=0}^{L_{00c}} f_{\textnormal{isi}}(\Psi_{0i}^t + B_c) + \nonum \\ &\sum_{i=0}^{L_{00n}} f_{\textnormal{isi}}(\Psi_{0i}^t + B_n),\label{eq:recpsi1}\\
\Psi_1^{t+1} &\myeq{D} -\kappa(a-b) + \sum_{i = 0}^{L_{11c}} f_{\textnormal{isi}}(\Psi_{1i}^t + B_c) +\nonum \\&\sum_{i = 0}^{L_{11n}}f_{\textnormal{isi}}(\Psi_{1i}^t + B_n) + \sum_{i=0}^{L_{10c}} f_{\textnormal{isi}}(\Psi_{0i}^t + B_c) + \nonum\\&\sum_{i=0}^{L_{10n}} f_{\textnormal{isi}}(\Psi_{0i}^t + B_n),\label{eq:recpsi2}
\end{align}
where $\myeq{D}$ represents equality in distribution, and the random sums are such that $L_{01c} \sim \text{Poi}(\kappa b \alpha \beta), L_{01n} \sim \text{Poi}(\kappa b(1-\alpha\beta)), L_{00c} \sim \text{Poi}(\kappa b \alpha(1-\beta)), L_{00n} \sim \text{Poi}(b(1-\kappa - \kappa\alpha(1-\beta)), L_{11c} \sim \text{Poi}(\kappa a \alpha \beta) , L_{11n} \sim \text{Poi}(\kappa a(1-\alpha\beta)), L_{10c} \sim \text{Poi}(\kappa b \alpha(1-\beta)) ,$ and $L_{10n} \sim \text{Poi}(b(1-\kappa - \kappa\alpha(1-\beta))),$ $B_c = \log\left(\frac{\beta(1-\kappa)}{(1-\beta)\kappa}\right);$ $B_n = \log\left(\frac{(1-\alpha\beta)(1-\kappa)}{(1-\kappa-\alpha\kappa+\alpha\kappa\beta)}\right);$ and $\Psi_{0,i}^t$ and $\Psi_{1,i}^t$ are iid \gls{rv}s with the same distribution as $\Psi_0^t$ and $\Psi_1^t$ respectively.

We now derive the asymptotic distributions $\Psi_0^{t+1}$ and $\Psi_1^{t+1}$ when $a,b \to \infty$ such that $\lambda = \frac{\kappa^2(a-b)^2}{(1-\kappa)b}$ and $\kappa$ are fixed. Observe that $\rho = 1 + \sqrt{\frac{\lambda(1-\kappa)}{\kappa^2 b}} = 1 + \sqrt{\frac{r}{b}},$ where $r := \frac{\lambda(1-\kappa)}{\kappa^2}.$
Notice that if $P_0 \sim \mathcal{L}(\Psi_0^{t})$ and $P_1 = \mathcal{L}(\Psi_1^{t}),$ we have, since $\Psi = \log\left(\frac{dP_1}{dP_0}\right),$ that $\frac{dP_0}{dP_1}(\Psi) = e^{-\Psi}.$ Also
\begin{eqnarray}
\lefteqn{f_{\textnormal{isi}}(x) = \log\left(1 + (\rho-1)\frac{e^{x-\nu}}{1 + e^{x-\nu}}\right)}\\
&=& \sqrt{\frac{r}{b}}\left(\frac{e^{x-\nu}}{1 + e^{x-\nu}}\right) - \\&&\frac{1}{2}\frac{r}{b}\left(\frac{e^{x-\nu}}{1 + e^{x-\nu}}\right)^2 + O(b^{-3/2}),
\end{eqnarray}
and
\begin{dmath}
f^2_{\textn{isi}}(x) = \frac{r}{b}\left(\frac{e^{x-\nu}}{1+e^{x-\nu}}\right)^2 + O(b^{-3/2}).
\end{dmath}
Now we can reformulate the recursions in (\ref{eq:recpsi1}) and (\ref{eq:recpsi2}) as a Poisson sum as follows:
\begin{align}
\Psi_0^{t+1} &\myeq{D} -\kappa(a-b) + \sum_{l=1}^{L_{0}} X_l \label{eq:p1}\\
\Psi_1^{t+1} &\myeq{D} -\kappa(a-b) + \sum_{l=1}^{L_{1}} Y_l\label{eq:p2},
\end{align}
where $L_{0} = \text{Poi}(b),L_{1} = \text{Poi}(\kappa a+(1-\kappa)b)$ and $X_l$ and $Y_l$ are mixture \gls{rv}s with laws defined as follows:
\begin{dmath*}
\mathcal{L}(X_l) = \alpha\kappa(1-\beta) \mathcal{L}(f_{\textnormal{isi}}(\Psi_0^t+B_c)) + (1-\kappa)(1-\alpha(1-\beta)e^{-\nu})\mathcal{L}(f_{\textnormal{isi}}(\Psi_0^t+B_n))+\alpha\kappa\beta \mathcal{L}(f_{\textnormal{isi}}(\Psi_1^t+B_c)) + \kappa(1-\alpha\beta)\mathcal{L}(f_{\textnormal{isi}}(\Psi_1^t+B_n)),
\end{dmath*}
\begin{dmath*}
  \mathcal{L}(Y_l) = \frac{\alpha\kappa b(1-\beta)}{\kappa a + (1-\kappa)b}\mathcal{L}(f_{\textnormal{isi}}(\Psi_0^t+B_c)) + \frac{(1-\kappa)b(1-\alpha(1-\beta)e^{-\nu})}{\kappa a + (1-\kappa)b}\mathcal{L}(f_{\textnormal{isi}}(\Psi_0^t+B_n)) + \frac{\kappa a \alpha\beta}{\kappa a + (1-\kappa)b}\mathcal{L}(f_{\textnormal{isi}}(\Psi_1^t+B_c)) + \frac{\kappa a (1-\alpha\beta)}{\kappa a + (1-\kappa)b} \mathcal{L}(f_{\textnormal{isi}}(\Psi_1^t+B_n)).
\end{dmath*}
Observe that we have $B_c - \nu = \log(\frac{\beta}{1-\beta})$ and $B_n - \nu = \log(\frac{\kappa(1-\alpha\beta)}{(1-\kappa -\alpha\kappa(1-\beta))}).$
We can calculate $\mathbb{E}(X_l)$ as
\begin{dmath*}
\mathbb E(X_l) = \alpha\kappa\beta\sqrt{\frac{r}{b}}  + \kappa(1-\alpha\beta)\sqrt{\frac{r}{b}}- \alpha\kappa\beta\frac{r}{2b} \mathbb{E}\left(\frac{e^{\Psi_1^t+B_c-\nu}}{1+e^{\Psi_1^t+B_c-\nu}}\right) - \kappa(1-\alpha\beta)\frac{r}{2b}\mathbb{E}\left(\frac{e^{\Psi_1^t+B_n-\nu}}{1+e^{\Psi_1^t+B_n-\nu}}\right) + O(b^{-3/2}),
\end{dmath*}
which gives,
\begin{dmath*}
\mathbb E(X_l) =   \kappa\sqrt{\frac{r}{b}}- \alpha\kappa\beta\frac{r}{2b} \mathbb{E}\left(\frac{e^{\Psi_1^t+B_c-\nu}}{1+e^{\Psi_1^t+B_c-\nu}}\right) - \kappa(1-\alpha\beta)\frac{r}{2b}\mathbb{E}\left(\frac{e^{\Psi_1^t+B_n-\nu}}{1+e^{\Psi_1^t+B_n-\nu}}\right) + O(b^{-3/2}).
\end{dmath*}
Similarly
\begin{dmath*}
\mathbb E(X_l^2) = \alpha\kappa \beta \frac{r}{b}\mathbb{E}\left(\frac{e^{\Psi_1^t+B_c-\nu}}{1+ e^{\Psi_1^t+B_c-\nu}}\right) + \frac{r\kappa(1-\alpha\beta)}{b}\mathbb{E}\left(\frac{e^{\Psi_1^t+B_n -\nu}}{1+e^{\Psi_1^t+B_n-\nu}}\right) + O(b^{-3/2}),
\end{dmath*}
and
\begin{eqnarray}
  \lefteqn{\mathbb E(|X_l^3|)}\nonum \\
  &=& \alpha \kappa \beta (\frac{r}{b})^{3/2}\mathbb E\left(\frac{e^{2(\Psi_1^t+B_c-\nu)}}{(1+e^{\Psi_1^t+B_c-\nu})^2}\right) + \nonum \\ &&\frac{\kappa(1-\alpha\beta) r^{3/2}}{b^{3/2}}\mathbb E \left(\frac{e^{2(\Psi_1^t+B_n-\nu)}}{(1+e^{\Psi_1^t+B_n-\nu})^2}\right)\nonum  \\
  && + O(b^{-2}). \label{eq:expXi3}
\end{eqnarray}
Similarly we can calculate the moments of $Y_l$ as follows:
\begin{dmath*}
  \mathbb E (Y_l) = \frac{\alpha\kappa b \beta}{\kappa a + (1-\kappa)b}\sqrt{\frac{r}{b}}\mathbb{E}\left(\frac{1+\rho e^{\Psi_1^t + B_c - \nu}}{1 + e^{\Psi_1^t + B_c - \nu}}\right) - \frac{r\alpha\kappa\beta}{2(\kappa a + (1-\kappa)b)}\mathbb{E}\left(\frac{e^{\Psi_1^t+B_c-\nu}(1+\rho e^{\Psi_1^t+B_c-\nu})}{(1+e^{\Psi_1^t+B_c-\nu})^2}\right) + \frac{\kappa b(1-\alpha\beta)}{\kappa a + (1-\kappa)b}\sqrt{\frac{r}{b}}\mathbb{E}\left(\frac{1+\rho e^{\Psi_1^t+B_n-\nu}}{1+e^{\Psi_1^t+B_n-\nu}}\right) - \frac{r\kappa(1-\alpha\beta)}{2(\kappa a + (1-\kappa b))} \mathbb{E}\left(\frac{e^{\Psi_1^t+B_n-\nu}(1+\rho e^{\Psi_1^t+B_n-\nu})}{(1+e^{\Psi_1^t+B_n-\nu})^2}\right) + O(b^{-3/2}),
\end{dmath*}
giving
\begin{dmath*}
\mathbb E (Y_l) = \kappa \sqrt{rb}\frac{1}{\kappa a + (1-\kappa)b} +  \frac{r\alpha\beta\kappa}{2(\kappa a + (1-\kappa)b)}\mathbb E \left(\frac{e^{\Psi_1^t+B_c-\nu}}{1+e^{\Psi_1^t+B_c-\nu}}\right) + \frac{r\kappa(1-\alpha\beta)}{2(\kappa a + (1-\kappa)b)}\mathbb E \left(\frac{e^{\Psi_1^t+B_n-\nu}}{1+e^{\Psi_1^t+B_n-\nu}}\right) + O(b^{-3/2}).
\end{dmath*}
In addition,
\begin{dmath*}
\mathbb E(Y_l^2) = \frac{\alpha\kappa\beta r}{\kappa a + (1-\kappa)b} \mathbb E \left(\frac{e^{\Psi_1^t+B_c-\nu}}{1+e^{\Psi_1^t+B_c-\nu}}\right) + \frac{\kappa r(1-\alpha\beta)}{\kappa a + (1-\kappa)b} \mathbb E \left(\frac{e^{\Psi_1^t+B_n-\nu}}{1+e^{\Psi_1^t+B_n-\nu}}\right) \\
+ O(b^{-2})
\end{dmath*}
and
\begin{eqnarray}
\lefteqn{\mathbb E(|Y_l^3|)} \nonum \\
&=& \frac{\alpha\kappa \beta r^{3/2}}{(\kappa a + (1-\kappa)b)b^{1/2}} \mathbb{E}\left(\frac{e^{2\Psi_1^t+2B_c-2\nu}}{(1+e^{\Psi_1^t+B_c-\nu})^2}\right)\nonum \\ &&+\frac{\kappa(1-\alpha\beta)r^{3/2}}{(\kappa a + (1-\kappa)b) b^{1/2}}\mathbb E \left(\frac{e^{2\Psi_1^t+2B_n - 2\nu}}{(1+e^{\Psi_1^t+2B_n-\nu})^2}\right)\nonum \\&&+ O(b^{-2}).\label{eq:expYi3}
\end{eqnarray}
Let us define $\mu^{(t)}$ as
\begin{dmath}
  \mu^{(t+1)} = \alpha \beta \kappa r \mathbb E \left(\frac{1}{1 + e^{-\Psi_1^t-B_c + \nu}}\right) + \kappa r (1-\alpha\beta)\mathbb E \left(\frac{1}{1 + e^{-\Psi_1^t-B_n +\nu}}\right)\label{eq:defmut}.
\end{dmath}

Finally we have
\begin{align*}
  \mathbb E(\Psi_0^{t+1}) &= -\kappa(a-b) + b \mathbb E(X_l) \\
  &= -\frac{\alpha\kappa\beta r}{2} \mathbb E\left(\frac{e^{\Psi_1^t+B_c-\nu}}{1 + e^{\Psi_1^t+B_c - \nu}}\right) \\
  & \quad -\frac{\kappa(1-\alpha\beta)r}{2} \mathbb E\left(\frac{e^{\Psi_1^t+B_n-\nu}}{1 + e^{\Psi_1^t+B_n - \nu}}\right) + O(b^{-1/2})\\
  &= -\frac{\mu^{t+1}}{2}  + O(b^{-1/2}),
\end{align*}
and
\begin{align*}
  \mathbb E(\Psi_1^{t+1}) &= -\kappa(a-b) + (\kappa a + (1-\kappa)b) \mathbb E(Y_l) \\
  &= \frac{\alpha\kappa\beta r}{2} \mathbb E\left(\frac{e^{\Psi_1^t+B_c-\nu}}{1 + e^{\Psi_1^t+B_c - \nu}}\right) \\
  & \quad + \frac{\kappa(1-\alpha\beta)r}{2} \mathbb E\left(\frac{e^{\Psi_1^t+B_n-\nu}}{1 + e^{\Psi_1^t+B_n - \nu}}\right) + O(b^{-1/2})\\
  &= \frac{\mu^{(t+1)}}{2}  + O(b^{-1/2}).
\end{align*}
In addition, for the variances of $\Psi_0^{t+1}$ and $\Psi_1^{t+1}$ we have

\begin{align}
  \textnormal{Var}(\Psi_0^{t+1}) &= b \mathbb E(X_l^2)\nonum \\
  &= \alpha\beta \kappa r \mathbb E\left(\frac{e^{\Psi_1^t+B_c - \nu}}{1 + e^{\Psi_1^t+B_c - \nu}}\right) \nonum \\ &\quad + \kappa(1-\alpha\beta)r \mathbb E \left(\frac{e^{\Psi_1^t+B_n -\nu}}{1 + e^{\Psi_i^t + B_n -\nu}}\right)
    + O(b^{-1/2})\nonum \\
    &= \mu^{(t+1)} + O(b^{-1/2}),\label{eq:vart0}
\end{align}
and similarly
\begin{align}
  \textnormal{Var}(\Psi_1^{t+1}) &= (\kappa a + (1-\kappa)b) \mathbb E(Y_l^2)\\
  &= \mu^{(t+1)} + O(b^{-1/2}).\label{eq:vart1}
\end{align}
Now we need to show the Gaussianity of the messages $\Psi_0^{t}$ and $\Psi_1^t,$ which we show using Lemma \ref{l:berryessen}.
For (\ref{eq:p1}) the upperbound in Lemma \ref{l:berryessen}  becomes
\begin{align}
\frac{C_{BE}\mathbb E(|X_i|^3)}{\sqrt{\gamma(\mu^2+\sigma^2)^3}} &= \frac{C_{BE}b \mathbb E(|X_i|^3)}{\sqrt{(b(\mu^2+\sigma^2))^3}}\nonum \\
&= \frac{C_{BE} b\mathbb E(|X_i|^3)}{\textnormal{Var}(\Psi_0^{t+1})^{3/2}} \label{eq:b1}
\end{align}
Similarly for (\ref{eq:p2}) we get
\begin{align}
\frac{C_{BE}\mathbb E(|Y_i|^3)}{\sqrt{\gamma(\mu^2+\sigma^2)^3}} &= \frac{C_{BE} (\kappa a + (1-\kappa)b)\mathbb E(|Y_i|^3)}{\textnormal{Var}(\Psi_1^{t+1})^{3/2}}.\label{eq:b2}
\end{align}

In Lemma~\ref{lem:fmu} stated and proved below, we show that $\mu^{(t+1)} \ge \alpha\beta^2\lambda\frac{1-\kappa}{\kappa}.$
Therefore for any $\kappa < 1/2,$ we have
\[
\textnormal{Var}(\Psi_0^{t+1}) = \textnormal{Var}(\Psi_0^{t+1}) \ge \frac{\alpha\beta^2\lambda}{2} + O(b^{-1/2}) = \Theta(1),
\]
under the assumptions of the proposition.
In addition we have $b\mathbb E (|X_i|^3) = O(b^{-1/2})$ and $(\kappa a + (1-\kappa)b)\mathbb E(|Y_i|^3) = O(b^{-1/2})$ from (\ref{eq:expXi3}) and (\ref{eq:expYi3}). Thus the bounds given in (\ref{eq:b1}) and (\ref{eq:b2}) both tend to zero as $b\to \infty.$

Hence by Lemma \ref{l:berryessen}, we obtain that $\Psi_1^t \to \mathcal{N}(\frac{\mu^{(t)}}{2},\mu^{(t)})$ and  $\Psi_0^t \to \mathcal{N}(-\frac{\mu^{(t)}}{2},\mu^{(t)})$ as $b \to \infty,$ where from (\ref{eq:defmut}), $\mu^{(t)}$ satisfies the following recursion with inital condition $\mu^{(0)} = 0:$
\begin{dmath}
  \mu^{(t+1)} = \alpha\beta \lambda \mathbb E \left(\frac{(1-\kappa)}{\kappa + (1-\kappa)e^{-\sqrt{\mu^{(t)}}Z -\frac{\mu^{(t)}}{2} -B_c}}\right) + (1-\alpha\beta)\lambda \mathbb E \left(\frac{(1-\kappa)}{\kappa + (1-\kappa)e^{-\sqrt{\mu^{(t)}}Z -\frac{\mu^{(t)}}{2} -B_n}}\right).\label{eq:defmut2}
\end{dmath}
Consequently, the distributions of the messages $R^t_{u \to i}$ in the limit of $n \to \infty$ converge to $\Gm_j^t + h_u,$ given $\{\sigma_u = j\},$ where $\Gm_1^t \sim \mathcal{N}(\frac{\mu^{(t)}}{2},\mu^{(t)})$ and $\Gm_0^t \sim \mathcal{N}(-\frac{\mu^{(t)}}{2},\mu^{(t)}),$ in the large degree limit where $b \to \infty.$\end{proof}
\subsection*{Proving the bound on $\mu^{(t)}$}
Let $F(\mu)$ be defined as
\begin{eqnarray*}
  \lefteqn{F(\mu)} \\&=& \alpha\beta^2 \lambda \mathbb E \left(\frac{(1-\kappa)/\kappa }{\beta + (1-\beta)\exp(-\mu/2-\sqrt{\mu}Z)}\right) + \\&&(1-\alpha\beta)^2\lambda\\&& \mathbb E \left(\frac{(1-\kappa)}{\kappa(1-\alpha\beta)+(1-\kappa -\alpha\kappa +\alpha\kappa\beta)e^{(-\mu/2-\sqrt{\mu}Z})}\right).
\end{eqnarray*}
Then $\mu^{(t)}$ satisfies the recursion $\mu^{(t+1)} = F(\mu^{(t)}),$ by substituting for $B_c$ and $B_n$ in (\ref{eq:defmut2}). Below we show a lower bound on  $F(\mu).$ For its proof we need the following Lemma from \cite{alon2004probabilistic}.
\begin{lemma}\label{lem:fkg}\cite[Theorem 6.2.1]{alon2004probabilistic}
If $f,g: \mathbb R \to \mathbb R$ are two non-decreasing functions, then $\mathbb E(fg) \ge \mathbb E(f) \mathbb E(g).$
\end{lemma}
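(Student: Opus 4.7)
The plan is the classical coupling argument, which is the standard route for this type of correlation inequality. I would introduce an independent copy $X'$ of the underlying random variable $X$ on which $f$ and $g$ are being evaluated. The key observation is that because $f$ and $g$ are both non-decreasing on $\mathbb{R}$, they are comonotone: the quantities $f(X) - f(X')$ and $g(X) - g(X')$ always have the same sign. Hence
\[
(f(X) - f(X'))\,(g(X) - g(X')) \ge 0 \quad \text{almost surely}.
\]

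Next I would take expectations of both sides and expand the product into four terms. By independence and identical distribution of $X$ and $X'$, the two ``off-diagonal'' terms each equal $\mathbb{E}[f]\,\mathbb{E}[g]$, while the two ``diagonal'' ones each equal $\mathbb{E}[fg]$. Rearranging yields $2\mathbb{E}[fg] - 2\mathbb{E}[f]\,\mathbb{E}[g] \ge 0$, which is the claimed inequality.

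There is essentially no obstacle here; the only things to check are the integrability conditions $\mathbb{E}[|f|], \mathbb{E}[|g|], \mathbb{E}[|fg|] < \infty$ needed to justify the expansion. In the intended downstream application to lower-bound $F(\mu)$, the natural choice will be $f(Z) = [\beta + (1-\beta)\exp(-\mu/2 - \sqrt{\mu}\,Z)]^{-1}$, non-decreasing in $Z$, paired with $-g(Z) = -[\beta + (1-\beta)\exp(-\mu/2 - \sqrt{\mu}\,Z)]$, which is also non-decreasing. Since $fg \equiv 1$ and $\mathbb{E}[g] = 1$ (because $\mathbb{E}[\exp(-\mu/2 - \sqrt{\mu}\,Z)] = 1$ for $Z \sim \mathcal{N}(0,1)$), applying the lemma to $f$ and $-g$ will give $\mathbb{E}[fg] \le \mathbb{E}[f]\,\mathbb{E}[g]$, i.e.\ $\mathbb{E}[f] \ge 1$, after which the bound $F(\mu) \ge \alpha\beta^2\lambda(1-\kappa)/\kappa$ follows by discarding the nonnegative second summand of $F$.
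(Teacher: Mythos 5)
Your coupling argument is correct and is the classical proof of this correlation inequality (Chebyshev's sum inequality, equivalently a one-dimensional FKG inequality): introduce an independent copy $X'$, use comonotonicity to get $(f(X)-f(X'))(g(X)-g(X'))\ge 0$ almost surely, expand, and rearrange. The paper does not prove this lemma at all --- it simply cites \cite[Theorem~6.2.1]{alon2004probabilistic} --- and your argument is precisely the one given in that reference, so there is no divergence to report on the lemma itself. Your side remark on the intended application is also accurate and slightly streamlines the paper's route: in the proof of Lemma~\ref{lem:fmu} the paper differentiates $g_\beta=\mathbb{E}\bigl[(\beta+(1-\beta)e^{-\mu/2-\sqrt{\mu}Z})^{-1}\bigr]$ in $\beta$ and applies the lemma with $f=e^{-\mu/2}X$ and $g=-(\beta+(1-\beta)e^{-\mu/2}X)^{-2}$, $X=e^{-\sqrt{\mu}Z}$, to conclude $dg_\beta/d\beta\le 0$ and hence $g_\beta\ge g_1=1$; you instead apply the lemma once to the comonotone pair $u=(\beta+(1-\beta)e^{-\mu/2-\sqrt{\mu}Z})^{-1}$ and $-u^{-1}$ and read off $\mathbb{E}[u]\ge 1$ directly (using $\mathbb{E}[u^{-1}]=1$), dispensing with the calculus. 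Both routes are valid, yours is marginally shorter.
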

Now we state our result on $F(\mu).$
\begin{lemma}\label{lem:fmu}
  For $0 < \beta < 1,$
  \[
  F(\mu) \ge \alpha\beta^2\lambda \frac{1-\kappa}{\kappa}.
  \]
\end{lemma}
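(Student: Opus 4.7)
The plan is to discard the (manifestly nonnegative) second term of $F(\mu)$ and bound the first term from below by $\alpha\beta^2\lambda(1-\kappa)/\kappa$. For this it suffices to show that
\[
\mathbb{E}\!\left(\frac{1}{\beta+(1-\beta)e^{-\mu/2-\sqrt{\mu}Z}}\right)\;\ge\;1,
\]
which is the crux of the lemma.

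My first step would be a change of variable: set $Y := e^{-\mu/2-\sqrt{\mu}Z}$. Since $Z\sim\mathcal{N}(0,1)$, a direct moment-generating-function computation gives $\mathbb{E}(Y)=e^{-\mu/2}\mathbb{E}(e^{-\sqrt{\mu}Z})=e^{-\mu/2}\cdot e^{\mu/2}=1$. So I need to show $\mathbb{E}[1/(\beta+(1-\beta)Y)]\ge 1$ with $\mathbb{E}(Y)=1$.

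Next, I would use the algebraic identity
\[
\frac{1}{\beta+(1-\beta)Y}\;=\;1+\frac{(1-\beta)(1-Y)}{\beta+(1-\beta)Y},
\]
reducing the task to showing $\mathbb{E}\!\left[\frac{1-Y}{\beta+(1-\beta)Y}\right]\ge 0$. This is where Lemma~\ref{lem:fkg} (FKG) enters, which explains why the authors flagged it. View both factors as functions of $Z$: since $Y(Z)=e^{-\mu/2-\sqrt{\mu}Z}$ is decreasing in $Z$, the functions $f(Z):=1-Y(Z)$ and $g(Z):=1/(\beta+(1-\beta)Y(Z))$ are both non-decreasing in $Z$. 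Hence by Lemma~\ref{lem:fkg},
\[
\mathbb{E}(fg)\;\ge\;\mathbb{E}(f)\,\mathbb{E}(g)\;=\;(1-\mathbb{E}Y)\,\mathbb{E}(g)\;=\;0,
\]
which yields the desired inequality. Combining this with the dropped second term then gives $F(\mu)\ge \alpha\beta^2\lambda(1-\kappa)/\kappa$.

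There is no real obstacle here; the only substantive step is recognizing the right monotone pair to apply FKG to. (Note that strict convexity of $y\mapsto 1/(\beta+(1-\beta)y)$ together with Jensen's inequality would give the same bound, so the authors' choice of FKG appears to be one of two equally short routes.)
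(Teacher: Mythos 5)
Your proof is correct, and it differs from the paper's in a meaningful structural way. The paper proves the key bound $\mathbb{E}[1/(\beta+(1-\beta)Y)]\ge 1$ by viewing the left side as a function $g_\beta$ of $\beta$, differentiating in $\beta$, and applying FKG (with $f=e^{-\mu/2}X$ and $g=-1/(\beta+(1-\beta)e^{-\mu/2}X)^2$) to show $dg_\beta/d\beta\le 0$; the result then follows from $g_1=1$ by monotonicity. You instead use the pointwise identity $\frac{1}{\beta+(1-\beta)Y}=1+\frac{(1-\beta)(1-Y)}{\beta+(1-\beta)Y}$ and apply FKG directly to the pair $f=1-Y$ and $g=1/(\beta+(1-\beta)Y)$, both non-decreasing in $Z$, using $\mathbb{E}(Y)=1$. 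This avoids the differentiation step entirely and is a bit tighter. Your closing observation is also apt: since $y\mapsto 1/(\beta+(1-\beta)y)$ is convex on $(0,\infty)$ (second derivative $2(1-\beta)^2/(\beta+(1-\beta)y)^3>0$), Jensen's inequality with $\mathbb{E}(Y)=1$ gives the bound immediately, without FKG or any monotonicity consideration — this is strictly simpler than either FKG-based route and would let the paper drop Lemma~\ref{lem:fkg} entirely. The only thing you left implicit — dropping the second term of $F(\mu)$ as nonnegative — is stated in the paper as well and is obviously valid, so there is no gap.
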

\begin{proof}
We show that
\[
g_{\beta} =  \mathbb E\left(\frac{1}{\beta + (1-\beta)\exp(-\mu/2 - \sqrt{\mu}Z)}\right)
\]
is nonincreasing for $0 \le \beta \le 1$ as shown below.
Let $X = \exp(-\sqrt{\mu}Z).$
Then $\frac{d}{d\beta}(g_{\beta}) = \mathbb E \left(\frac{\exp(-\mu/2)X - 1}{(\beta + (1-\beta)e^{-\mu/2}X)^2}\right).$
Now we show $\frac{d}{d\beta}(g_{\beta}) < 0$ using Lemma \ref{lem:fkg}. In Lemma \ref{lem:fkg}, let $f = \exp(-\mu/2)X$ and $g = \frac{-1}{(\beta + (1-\beta)e^{-\mu/2}X)^2}.$ Clearly these are non-decreasing in $X.$ Therefore $\mathbb E(fg) \ge \mathbb E(f)\mathbb E (g) = \mathbb E (g),$ since $\mathbb E (f) = \mathbb E e^{-\mu/2}e^{\sqrt{\mu}Z} = 1.$ Therefore we have
\[
\mathbb E \left(\frac{-e^{-\mu/2}X}{(\beta + (1-\beta)e^{-\mu/2}X)^2}\right) \ge \mathbb E \left(\frac{-1}{(\beta + (1-\beta)e^{-\mu/2}X)^2}\right),
\]
hence $\frac{dg_{\beta}}{d\beta} < 0$ for all $\beta.$ Therefore $1 = g_{\beta}(1) \le g_{\beta}(\beta)$ for $\beta < 1.$ The result then follows by substituting this lower bound in the definition of $F(\mu)$ and observing that the second term is strictly non-negative.
\end{proof}

\subsection*{Proof of Theorem \ref{thm:bpimpsideinfo}}
\begin{proof}

Notice that when we set $\beta = 1$ the recursion (\ref{eq:recmu2}) becomes the same as (\ref{eqrecu}). Also, when $\beta = 0$ we can retrieve the recursion for standard BP without side-information, i.e., and from this it can be gleaned that the asymptotic error rate is zero only if $\lambda > 1/e.$

Let us now consider $0 < \beta < 1.$ By Lemma \ref{lem:fmu}, we have
\[
\alpha\beta^2\lambda \frac{1-\kappa}{\kappa} \le \mu^{(t)} \le \lambda\frac{(1-\kappa)}{\kappa}.
\]
Hence $\mu^{(t)} = \Theta\left(\frac{1-\kappa}{\kappa}\right).$
The asymptotic distributions of the messages are as follows:
\begin{align*}
\Gm_{0,0}^{t} &\sim \mathcal{N}(-\mu^{(t)}/2,{\mu^{(t)}}) + \log\left(\frac{(1-\alpha\beta)(1-\kappa)}{(1-\kappa-\alpha\kappa + \alpha\kappa\beta)}\right)\\
\Gm_{0,1}^{t} &\sim \mathcal{N}(-\mu^{(t)}/2,{\mu^{(t)}}) + \log\left(\frac{\beta(1-\kappa)}{\kappa(1-\beta)}\right)\\
\Gm_{1,0}^{t} &\sim \mathcal{N}(\mu^{(t)}/2,{\mu^{(t)}}) + \log\left(\frac{(1-\alpha\beta)(1-\kappa)}{(1-\kappa-\alpha\kappa + \alpha\kappa\beta)}\right)\\
\Gm_{1,1}^{t} &\sim \mathcal{N}(\mu^{(t)}/2,{\mu^{(t)}}) + \log\left(\frac{\beta(1-\kappa)}{\kappa(1-\beta)}\right),\\
\end{align*}
where $\Gm_{j,k}^t$ is the rv with the asymptotic distribution of the messages $R_{u \to i}^t$ in the limit of $n \to \infty$ and $b \to \infty,$ given $\{\sigma_u = j, c_u = k\}.$ We can now write the probability of error $p_e^{\beta}$ of the per-node MAP detector $\widehat{S}_0$ as
\begin{align*}
p_e^{\beta} &= p_e^{\beta}(i|\sigma_i=0,c_i=0) P(\sigma_i = 0,c_i = 0)+\\&p_e^{\beta}(i|\sigma_i=0,c_i=1) P(\sigma_i = 0,c_i = 1)\\&+p_e^{\beta}(i|\sigma_i=1,c_i=0) P(\sigma_i = 1, c_i = 0)\\&+p_e^{\beta}(i|\sigma_i=1,c_i=1) P(\sigma_i = 1, c_i = 1)\\
& = P_{0,0}(R_i^t > \nu)\pi_{0,0}+P_{0,1}(R_i^t > \nu)\pi_{0,1}\\&+P_{1,0}(R_i^t < \nu)\pi_{1,0}+P_{1,1}(R_i^t < \nu)\pi_{1,1},
\end{align*}
 is the error rate of Algorithm \ref{alg:bp_alg2}, where $p_e^{\beta}(i|\sigma_i=0,c_i=0)$ denotes the probability that node $i$ is misclassified, given $\{\sigma_i=0,c_i=0\}$ and $\pi_{0,1} = \mathbb P(\sigma_i = 0,c_i = 1)$ etc. Then the expected fraction of mislabelled nodes $\frac{\mathbb E (|\hat{S}_0\Delta S|)}{K}$ in the limit $n \to \infty, b \to \infty$  is
\begin{eqnarray*}
  \lefteqn{\lim_{b\to \infty}\lim_{n\to\infty}\frac{np_e^{\beta}}{K}} \\
   &=&Q\left(\frac{\frac{\mu^{(t)}}{2} + \log(\frac{\beta}{(1-\beta)})}{\sqrt{\mu^{(t)}}}\right)\alpha\beta +\\ &&(1-\alpha\beta)\\&&Q\left(\frac{\frac{\mu^{(t)}}{2} - \log\left(\frac{1-\kappa}{\kappa}\left(\frac{1 - \frac{\alpha\kappa(1-\beta)}{1-\kappa}}{1-\alpha\beta}\right)\right)}{\sqrt{\mu^{(t)}}}\right) +\\ &&\alpha(1-\beta) Q\left(\frac{\frac{\mu^{(t)}}{2} - \log(\frac{\beta}{1-\beta})}{\sqrt{\mu^{(t)}}}\right) +\\&& (\frac{1-\kappa}{\kappa}-\alpha(1-\beta)) Q\left(\frac{\frac{\mu^{(t)}}{2} - \log\left(\frac{(1-\alpha\beta)\kappa}{(1-\kappa-\alpha\kappa+\alpha\kappa\beta)}\right)}{\sqrt{\mu^{(t)}}}\right).
\end{eqnarray*}
We can show, by a calculation similar to the one followed in the proof of Theorem \ref{pr:weakrecovery}, that
\begin{eqnarray*}
\lefteqn{\lim_{b\to \infty}\lim_{n\to\infty}\frac{np_e^{\beta}}{K}}\\
&\le& \biggl(\alpha\sqrt{\beta(1-\beta)} + \\
&&\sqrt{(1-\alpha\beta) (\frac{1-\kappa}{\kappa} - \alpha(1-\beta))}\biggr) e^{-\frac{\lambda \alpha \beta^2 (1-\kappa)}{8\kappa}}.
\end{eqnarray*}
Finally by a similar calculation to (\ref{eq:boundexp}),
\begin{eqnarray*}
\lefteqn{\lim_{b\to \infty}\lim_{n\to\infty}\frac{\mathbb E (|S\Delta\widehat{S}|)}{K}} \\
&\le& 2\biggl(\alpha\sqrt{\beta(1-\beta)} + \\
&&\sqrt{(1-\alpha\beta) (\frac{1-\kappa}{\kappa} - \alpha(1-\beta))}\biggr) e^{-\frac{\lambda \alpha \beta^2 (1-\kappa)}{8\kappa}}.
\end{eqnarray*}\end{proof}

\tableofcontents

\end{document}